\newtheorem{lemma}{Lemma}
\newtheorem{remark}{Remark}
\providecommand{\eref}[1]{\eqref{#1}}  % call \eqref from amstex
\providecommand{\cref}[1]{Chapter~\ref{#1}}
\providecommand{\fref}[1]{Figure~\ref{#1}}
\providecommand{\R}{\ensuremath{\mathbb{R}}}
\providecommand{\E}{\ensuremath{\mathbb{E}}}
\providecommand{\bydef}{\overset{\text{def}}{=}}
\renewcommand{\vec}[1]{\ensuremath{\boldsymbol{#1}}}
\providecommand{\mat}[1]{\ensuremath{\boldsymbol{#1}}}
\providecommand{\calL}{\mathcal{L}}
\providecommand{\calN}{\mathcal{N}}
\providecommand{\calP}{\mathcal{P}}
\providecommand{\calS}{\mathcal{S}}
\providecommand{\mA}{\mat{A}}
\providecommand{\mB}{\mat{B}}
\providecommand{\mG}{\mat{G}}
\providecommand{\mI}{\mat{I}}
\providecommand{\mP}{\mat{P}}
\providecommand{\mS}{\mat{S}}
\providecommand{\mU}{\mat{U}}
\providecommand{\mW}{\mat{W}}
\providecommand{\mX}{\mat{X}}
\providecommand{\vc}{\vec{c}}
\providecommand{\ve}{\vec{e}}
\providecommand{\vp}{\vec{p}}
\providecommand{\vq}{\vec{q}}
\providecommand{\vu}{\vec{u}}
\providecommand{\vx}{\vec{x}}
\providecommand{\mLambda}{\mat{\Lambda}}
\providecommand{\mSigma}{\mat{\Sigma}}
\providecommand{\w}{\omega}
\providecommand{\veta}{\vec{\eta}}
\providecommand{\vmu}{\vec{\mu}}
\providecommand{\vphat}{\boldsymbol{\widehat{p}}}
\providecommand{\vpbar}{\overline{\boldsymbol{p}}}
\providecommand{\vzero}{\vec{0}}
\providecommand{\vone}{\vec{1}}
\newcommand{\defequal}{\mathop{\overset{\mbox{\tiny{def}}}{=}}}
\newcommand{\subjectto}{\mathop{\mathrm{subject\, to}}}
\newcommand{\argmin}[1]{\mathop{\underset{#1}{\mathrm{arg\,min}}}}
\newcommand{\minimize}[1]{\mathop{\underset{#1}{\mathrm{minimize}}}}
\newcommand{\diag}[1]{\mathop{\mathrm{diag}\left\{#1\right\}}}
\title{Adaptive Image Denoising by Targeted Databases}%: \\ Algorithm, Optimality and Perturbation Analysis}
\author{Enming Luo,~\IEEEmembership{Student Member,~IEEE}, Stanley~H.~Chan,~\IEEEmembership{Member,~IEEE}, and Truong~Q.~Nguyen,~\IEEEmembership{Fellow,~IEEE}
\thanks{E. Luo and T. Nguyen are with Department of Electrical and Computer Engineering, University of California at San Diego, La Jolla, CA 92093, USA. Emails: eluo@ucsd.edu and nguyent@ece.ucsd.edu}
\thanks{S. Chan is with School of Electrical and Computer Engineering, Purdue University, West Lafayette, IN 47907, USA. Email: stanleychan@purdue.edu}
\thanks{This work was supported in part by a Croucher Foundation Post-doctoral Research Fellowship, and in part by the National Science Foundation under grant CCF-1065305. Preliminary material in this paper was presented at the 39th IEEE International Conference on Acoustics, Speech and Signal Processing (ICASSP), Florence, May 2014. }
}
\begin{document}
%\onecolumn
\maketitle

\begin{abstract}
We propose a data-dependent denoising procedure to restore noisy images. Different from existing denoising algorithms which search for patches from either the noisy image or a generic database, the new algorithm finds patches from a database that contains relevant patches. We formulate the denoising problem as an optimal filter design problem and make two contributions. First, we determine the basis function of the denoising filter by solving a group sparsity minimization problem. The optimization formulation generalizes existing denoising algorithms and offers systematic analysis of the performance. Improvement methods are proposed to enhance the patch search process. Second, we determine the spectral coefficients of the denoising filter by considering a localized Bayesian prior. The localized prior leverages the similarity of the targeted database, alleviates the intensive Bayesian computation, and links the new method to the classical linear minimum mean squared error estimation. We demonstrate applications of the proposed method in a variety of scenarios, including text images, multiview images and face images. Experimental results show the superiority of the new algorithm over existing methods.
\end{abstract}

\begin{keywords}
Patch-based filtering, image denoising, external database, optimal filter, non-local means, BM3D, group sparsity, Bayesian estimation
\end{keywords}

\section{Introduction}
\subsection{Patch-based Denoising}
Image denoising is a classical signal recovery problem where the goal is to restore a clean image from its observations. Although image denoising has been studied for decades, the problem remains a fundamental one as it is the test bed for a variety of image processing tasks.

Among the numerous contributions in image denoising in the literature, the most highly-regarded class of methods, to date, is the class of $\emph{patch-based image denoising}$ algorithms \cite{Buades_Coll_2005_Journal, Kervrann_Boulanger_2007, Dabov_Foi_Katkovnik_2007, Dabov_Foi_Katkovnik_Egiazarian_2008, Dabov_Foi_Katkovnik_Egiazarian_2009, Mairal_Bach_Ponce_Sapiro_Zisserman_2009, Zhang_Dong_Zhang_2010, Dong_Zhang_2013, Rajwade_Rangarajan_Banerjee_2013}. The idea of a patch-based denoising algorithm is simple: Given a $\sqrt{d}\times\sqrt{d}$ patch $\vq\in \R^d$ from the noisy image, the algorithm finds a set of reference patches $\vp_1,\ldots,\vp_k \in \R^d$ and applies some linear~(or non-linear) function $\Phi$ to obtain an estimate $\vphat$ of the unknown clean patch $\vp$ as
\begin{equation}
\vphat = \Phi(\vq;\, \vp_1,\ldots,\vp_k).
\label{eq:patch-based denoising}
\end{equation}
For example, in non-local means~(NLM)~\cite{Buades_Coll_2005_Journal}, $\Phi$ is a weighted average of the reference patches, whereas in BM3D~\cite{Dabov_Foi_Katkovnik_2007}, $\Phi$ is a transform-shrinkage operation.

\subsection{Internal vs External Denoising}
For any patch-based denoising algorithm, the denoising performance is intimately related to the reference patches $\vp_1,\ldots,\vp_k$. Typically, there are two sources of these patches: the noisy image itself and an external database of patches. The former is known as \emph{internal denoising} \cite{Zontak_Irani_2011}, whereas the latter is known as \emph{external denoising} \cite{Mosseri_Zontak_Irani_2013, Burger_Schuler_harmeling_2013}.

Internal denoising is practically more popular than external denoising because it is computationally less expensive. Moreover, internal denoising does not require a training stage, hence making it free of training bias. Furthermore, Glasner et al. \cite{Glasner_Bagon_Irani_2009} showed that patches tend to recur within an image, \emph{e.g.}, at a different location, orientation, or scale. Thus searching for patches in the noisy image is often a plausible approach. However, on the downside, internal denoising often fails for rare patches --- patches that seldom recur in an image. This phenomenon is known as ``rare patch effect'', and is widely regarded as a bottleneck of internal denoising \cite{Chatterjee_Milanfar_2010,Levin_Nadler_2011}. There are some works \cite{Yan_Shao_Cvetkovic_Klijn_2012,Lou_Favaro_Soatto_Bertozzi_2009} attempting to alleviate the rare patch problem. However, the extent to which these methods can achieve is still limited.

External denoising \cite{Mairal_Bach_Ponce_Sapiro_Zisserman_2009,Elad_Aharon_2006,Zoran_Weiss_2011,Chan_Zickler_Lu_2013,Chan_Zickler_Lu_2014} is an alternative solution to internal denoising. Levin et al. \cite{Levin_Nadler_2011, Levin_Nadler_Durand_Freeman_2012} showed that in the limit, the theoretical minimum mean squared error of denoising is achievable using an infinitely large external database. Recently, Chan et al. \cite{Chan_Zickler_Lu_2013,Chan_Zickler_Lu_2014} developed a computationally efficient sampling scheme to reduce the complexity and demonstrated practical usage of large databases. However, in most of the recent works on external denoising, \emph{e.g.}, \cite{Mairal_Bach_Ponce_Sapiro_Zisserman_2009,Elad_Aharon_2006,Zoran_Weiss_2011}, the databases used are \emph{generic}. These databases, although large in volume, do not necessarily contain useful information to denoise the noisy image of interest. For example, it is clear that a database of natural images is not helpful to denoise a noisy portrait image.

\subsection{Adaptive Image Denoising}
In this paper, we propose an adaptive image denoising algorithm using a \emph{targeted} external database instead of a \emph{generic} database. Here, a targeted database refers to a database that contains images \emph{relevant} to the noisy image only. As will be illustrated in later parts of this paper, targeted external databases could be obtained in many practical scenarios, such as text images (\emph{e.g.}, newspapers and documents), human faces (under certain conditions), and images captured by multiview camera systems. Other possible scenarios include images of license plates, medical CT and MRI images, and images of landmarks.

The concept of using targeted external databases has been proposed in various occasions, \emph{e.g.}, \cite{Joshi_Matusik_Adelson_kriegman_2010, Sun_Hays_2012, Yang_Wright_Huang_Ma_2008, Johnson_Dale_Avidan_Pfister_Freeman_Matusik_2011, Elad_Datsenko_2009,Dale_Johnson_Sunkavalli_Matusik_Pfister_2009}. However, none of these methods are tailored for image denoising problems. The objective of this paper is to bridge the gap by addressing the following question:

\begin{center}
(Q): Suppose we are \emph{given} a targeted external database, how should we design a denoising algorithm which can \emph{maximally} utilize the database?
\end{center}
Here, we assume that the reference patches $\vp_1,\ldots,\vp_k$ are \emph{given}. We emphasize that this assumption is application specific --- for the examples we mentioned earlier (\emph{e.g.}, text, multiview, face, etc), the assumption is typically true because these images have relatively less variety in content.

When the reference patches are given, question (Q) may look trivial at the first glance because we can extend existing internal denoising algorithms in a brute-force way to handle external databases. For example, one can modify existing algorithms, \emph{e.g.}, \cite{Buades_Coll_2005_Journal,Dabov_Foi_Katkovnik_2007,Dabov_Foi_Katkovnik_Egiazarian_2009,Ram_Elad_Cohen_2013,Shao_Zhang_deHann_2008}, so that the patches are searched from a database instead of the noisy image. Likewise, one can also treat an external database as a ``video'' and feed the data to multi-image denoising algorithms, \emph{e.g.}, \cite{Dabov_Foi_Egiazarian_2007,Zhang_Vaddadi_Jin_Nayar_2009,Buades_Lou_Morel_Tang_2009,Luo_Chan_Pan_Nguyen_2013}. However, the problem of these approaches is that the brute force modifications are heuristic. There is no theoretical guarantee of performance. This suggests that a straight-forward modification of existing methods does \emph{not} solve question (Q), as the database is not maximally utilized.

An alternative response to question (Q) is to train a statistical prior of the targeted database, \emph{e.g.}, \cite{Mairal_Bach_Ponce_Sapiro_Zisserman_2009, Elad_Aharon_2006, Aharon_Elad_Bruckstein_2005, Roth_Black_2009, Zoran_Weiss_2011, Yu_Sapiro_Mallat_2012, Yan_Shao_Liu_2013}. The merit of this approach is that the performance often has theoretical guarantee because the denoising problem can now be formulated as a maximum a posteriori (MAP) estimation. However, the drawback is that many of these methods require a large number of training samples which is not always available in practice.

\subsection{Contributions and Organization}
In view of the above seemingly easy yet challenging question, we introduced a new denoising algorithm using targeted external databases in \cite{Luo_Chan_Nguyen_2014}. Compared to existing methods, the method proposed in \cite{Luo_Chan_Nguyen_2014} achieves better performance and only requires a small number of external images. In this paper, we extend \cite{Luo_Chan_Nguyen_2014} by offering the following new contributions:

\begin{enumerate}
  \item Generalization of Existing Methods. We propose a generalized framework which encapsulates a number of denoising algorithms. In particular, we show (in Section III-B) that the proposed group sparsity minimization generalizes both fixed basis and PCA methods. We also show (in Section IV-B) that the proposed local Bayesian MSE solution is a generalization of many spectral operations in existing methods.
  \item Improvement Strategies. We propose two improvement strategies for the generalized denoising framework. In Section III-D, we present a patch selection optimization to improve the patch search process. In Section IV-D, we present a soft-thresholding and a hard-thresholding method to improve the spectral coefficients learned by the algorithm.
%  \item Perturbation Analysis (Section V). We analyze the sensitivity of the proposed denoising framework with respect to the variability of the external database. To our knowledge, similar analysis has not been discussed in the denoising literature.
  \item Detailed Proofs. Proofs of the results in this paper and \cite{Luo_Chan_Nguyen_2014} are presented in the Appendix.
\end{enumerate}

The rest of the paper is organized as follows. After outlining the design framework in Section II, we present the above contributions in Section III -- IV. Experimental results are discussed in Section V, and concluding remarks are given in Section VI.

\section{Optimal Linear Denoising Filter}
\label{section2}
The foundation of our proposed method is the classical optimal linear denoising filter design problem \cite{Milanfar_2013a}. In this section, we give a brief review of the design framework and highlight its limitations.

\subsection{Optimal Filter}
The design of an optimal denoising filter can be posed as follows: Given a noisy patch $\vq \in \R^d$, and assuming that the noise is i.i.d. Gaussian with zero mean and variance $\sigma^2$, we want to find a linear operator $\mA \in \R^{d \times d}$ such that the estimate $\vphat = \mA \vq$ has the minimum mean squared error (MSE) compared to the ground truth $\vp \in \R^d$. That is, we want to solve the optimization
\begin{equation}
\mA = \argmin{\mA}\;\E\left[\|\mA\vq - \vp\|_2^2\right].
\label{eq:mse problem}
\end{equation}

Here, we assume that $\mA$ is symmetric, or otherwise the Sinkhorn-Knopp iteration \cite{Milanfar_2013b} can be used to symmetrize $\mA$, provided that entries of $\mA$ are non-negative. Given a symmetric $\mA$, one can apply the eigen-decomposition, $\mA = \mU\mLambda\mU^T$, where $\mU = [\vu_1,\ldots,\vu_d] \in \R^{d \times d}$ is the basis matrix and $\mLambda = \diag{\lambda_1,\ldots,\lambda_d}\in \R^{d \times d}$ is the diagonal matrix containing the spectral coefficients. With $\mU$ and $\mLambda$, the optimization problem in \eref{eq:mse problem} becomes
\begin{equation}
(\mU,\mLambda) = \argmin{\mU,\mLambda}\;\E\left[ \left\| \mU\mLambda\mU^T\vq - \vp \right\|_2^2\right],
\label{eq:mse_find U Lambda}
\end{equation}
subject to the constraint that $\mU$ is an orthonormal matrix.

The joint optimization \eref{eq:mse_find U Lambda} can be solved by noting the following Lemma.
\begin{lemma}
\label{lemma:mse}
Let $\vu_i$ be the $i$th column of the matrix $\mU$, and $\lambda_i$ be the $(i,i)$th entry of the diagonal matrix $\mLambda$. If $\vq = \vp + \veta$, where $\veta \overset{\mathrm{\scriptsize{iid}}}{\sim} \calN(\vzero,\sigma^2\mI)$, then
\begin{equation}
\E\left[ \left\| \mU\mLambda\mU^T\vq - \vp \right\|_2^2\right]
= \sum_{i=1}^d \left[(1-\lambda_i)^2 (\vu_i^T\vp)^2 + \sigma^2 \lambda_i^2\right].
\label{eq:mse}
\end{equation}
\end{lemma}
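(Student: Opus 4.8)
The plan is to substitute the noise model $\vq = \vp + \veta$ directly into the objective and exploit the orthonormality of $\mU$ to collapse everything into a sum over the eigen-coordinates. First I would rewrite the error vector by observing that $\vp = \mU\mU^T\vp$ since $\mU$ is orthonormal, so that
\begin{equation}
\mU\mLambda\mU^T\vq - \vp = \mU(\mLambda - \mI)\mU^T\vp + \mU\mLambda\mU^T\veta.
\label{eq:error-split}
\end{equation}
This cleanly isolates a deterministic, signal-dependent term from a zero-mean stochastic term driven by the noise.

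The second step is to expand the expected squared norm of \eref{eq:error-split}. Writing the error as the sum of the deterministic part $\va = \mU(\mLambda - \mI)\mU^T\vp$ and the random part $\vb = \mU\mLambda\mU^T\veta$, the cross term $2\va^T\E[\vb]$ vanishes because $\E[\veta] = \vzero$. Hence $\E[\norm{\va+\vb}_2^2] = \norm{\va}_2^2 + \E[\norm{\vb}_2^2]$, which separates the bias contribution from the variance contribution.

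The third step handles each piece using the fact that $\mU$ preserves the Euclidean norm. For the deterministic term, $\norm{\va}_2^2 = \norm{(\mLambda - \mI)\mU^T\vp}_2^2$; since $\mLambda - \mI$ is diagonal with entries $\lambda_i - 1$ while the $i$th entry of $\mU^T\vp$ is $\vu_i^T\vp$, this collapses to $\sum_{i=1}^d (1-\lambda_i)^2(\vu_i^T\vp)^2$. For the stochastic term, I would use the quadratic-form identity $\E[\norm{\vb}_2^2] = \trace{\mU\mLambda^2\mU^T\,\E[\veta\veta^T]} = \sigma^2\trace{\mLambda^2}$, where $\E[\veta\veta^T] = \sigma^2\mI$ together with the cyclic property of the trace and $\mU^T\mU = \mI$ gives $\trace{\mLambda^2} = \sum_{i=1}^d \lambda_i^2$. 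Adding the two contributions yields the claimed formula.

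Since every step is a routine manipulation, there is no genuine obstacle here; the only point requiring care is the evaluation of the noise quadratic form, where one must correctly invoke $\E[\veta\veta^T] = \sigma^2\mI$ and the trace identity rather than mistakenly treating $\mU\mLambda\mU^T\veta$ as if it shared the norm of $\veta$. The orthonormality of $\mU$ is the single structural fact on which the entire argument rests, as it is what allows both the bias and the variance terms to diagonalize simultaneously.
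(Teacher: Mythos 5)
Your proof is correct. Note that the paper does not actually prove this lemma itself --- it defers to the citation [Talebi and Milanfar, 2014] --- so there is no in-paper argument to compare against; your derivation is the standard bias--variance decomposition one would expect there. Each step checks out: the splitting $\mU\mLambda\mU^T\vq - \vp = \mU(\mLambda-\mI)\mU^T\vp + \mU\mLambda\mU^T\veta$ (valid because $\mU\mU^T = \mI$ for the square orthonormal $\mU$), the vanishing of the cross term via $\E[\veta]=\vzero$, the norm preservation giving $\sum_{i=1}^d (1-\lambda_i)^2(\vu_i^T\vp)^2$ for the bias, and the trace identity $\E\left[\norm{\mU\mLambda\mU^T\veta}_2^2\right] = \sigma^2\trace{\mLambda^2} = \sigma^2\sum_{i=1}^d \lambda_i^2$ for the variance. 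Your argument also silently establishes the fact the paper uses immediately afterward in \eref{eq:mse,lambda_i}, namely that the objective decouples across $i$, which is the structural point exploited in the proof of Lemma~\ref{lemma:mse,solution} in Appendix~\ref{appendix:proof,mse,solution}.
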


The proof of Lemma~\ref{lemma:mse} is given in \cite{Talebi_Milanfar_2014}. With Lemma~\ref{lemma:mse}, the denoised patch as a consequence of \eref{eq:mse_find U Lambda} is as follows.
\begin{lemma}
\label{lemma:mse,solution}
The denoised patch $\vphat$ using the optimal $\mU$ and $\mLambda$ of \eref{eq:mse_find U Lambda} is
\begin{equation*}
\vphat = \mU
\left(\diag{\frac{\|\vp\|^2}{\|\vp\|^2+ \sigma^2},0,\ldots,0}\right)
\mU^T\vq,
\label{eq:mse,phat1}
\end{equation*}
where $\mU$ is any orthonormal matrix with the first column $\vu_1 = \vp/\|\vp\|_2$.
\end{lemma}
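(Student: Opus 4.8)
The plan is to exploit the separable structure of the mean squared error supplied by Lemma~\ref{lemma:mse} and to decouple the joint minimization over $\mU$ and $\mLambda$ into an inner problem over the spectral coefficients and an outer problem over the orthonormal basis. Writing $a_i \bydef (\vu_i^T\vp)^2$, the right-hand side of \eref{eq:mse} is $\sum_{i=1}^d [(1-\lambda_i)^2 a_i + \sigma^2\lambda_i^2]$, in which each summand depends on a single $\lambda_i$. Hence, for any fixed $\mU$, the optimal coefficients can be found one at a time.

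First I would carry out the inner minimization. For each $i$ the summand $(1-\lambda_i)^2 a_i + \sigma^2\lambda_i^2$ is a strictly convex quadratic in $\lambda_i$, so setting its derivative to zero gives the unique minimizer $\lambda_i^\star = a_i/(a_i+\sigma^2)$. Substituting $\lambda_i^\star$ back into the summand and simplifying collapses it to $\sigma^2 a_i/(a_i+\sigma^2)$, so the MSE, now minimized over $\mLambda$ and regarded as a function of $\mU$ alone, becomes $\sum_{i=1}^d \sigma^2 a_i/(a_i+\sigma^2)$. This reduces the problem to choosing the orthonormal basis that minimizes this reduced objective.

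Next I would perform the outer minimization over orthonormal $\mU$. The key structural fact is Parseval's identity: because $\{\vu_i\}$ is an orthonormal basis, $\sum_{i=1}^d a_i = \sum_{i=1}^d (\vu_i^T\vp)^2 = \|\vp\|_2^2$, a quantity independent of $\mU$. Thus I must minimize $\sum_i g(a_i)$ with $g(a) = \sigma^2 a/(a+\sigma^2)$ over nonnegative $a_i$ summing to the fixed constant $\|\vp\|_2^2$. I expect this step to be the main obstacle, and it is resolved by observing that $g$ is concave on $[0,\infty)$ with $g(0)=0$, hence subadditive, so that $g(x)+g(y)\ge g(x+y)$ for nonnegative $x,y$. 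Iterating this inequality shows $\sum_i g(a_i) \ge g(\sum_i a_i) = g(\|\vp\|_2^2)$, so the reduced objective is bounded below by $g(\|\vp\|_2^2)$, with equality when all the energy is concentrated in a single coordinate, say $a_1 = \|\vp\|_2^2$ and $a_i = 0$ for $i \ge 2$.

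Finally I would verify that this optimum is attainable and read off the filter. The concentration $a_1 = \|\vp\|_2^2$, $a_i = 0$ is realized precisely by taking $\vu_1 = \vp/\|\vp\|_2$ and completing $\vu_2,\ldots,\vu_d$ to any orthonormal basis of the orthogonal complement of $\vp$; then $(\vu_1^T\vp)^2 = \|\vp\|_2^2$ and $(\vu_i^T\vp)^2 = 0$ for $i\ge 2$. Plugging these $a_i$ into $\lambda_i^\star$ gives $\lambda_1^\star = \|\vp\|_2^2/(\|\vp\|_2^2+\sigma^2)$ and $\lambda_i^\star = 0$ for $i \ge 2$, i.e. $\mLambda = \diag{\|\vp\|_2^2/(\|\vp\|_2^2+\sigma^2),0,\ldots,0}$. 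Since the coefficients attached to $\vu_2,\ldots,\vu_d$ vanish, the particular orthonormal completion is immaterial, which accounts for the phrase \emph{any orthonormal matrix with first column $\vu_1 = \vp/\|\vp\|_2$}. Substituting this $\mLambda$ into $\vphat = \mU\mLambda\mU^T\vq$ yields the claimed expression.
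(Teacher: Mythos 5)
Your proof is correct, and its second half takes a genuinely different route from the paper's. Both arguments begin identically: the per-coordinate quadratic in $\lambda_i$ is minimized at $\lambda_i = (\vu_i^T\vp)^2/((\vu_i^T\vp)^2+\sigma^2)$. But the paper then treats $\vu_i$ by a Lagrangian stationarity analysis of each summand separately (temporarily dropping the orthogonality constraints), finds two families of critical points --- $\vu_i \perp \vp$ with $\lambda_i = 0$, or $\vu_i = \vp/\|\vp\|_2$ --- and assembles the answer by reinstating orthogonality; it never explicitly explains why exactly one basis vector should align with $\vp$ rather than, say, none (each summand in isolation would prefer $\vu_i \perp \vp$, which is infeasible for a full orthonormal basis). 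Your approach instead eliminates $\mLambda$ to get the reduced objective $\sum_i \sigma^2 a_i/(a_i+\sigma^2)$ with $a_i = (\vu_i^T\vp)^2$, invokes Parseval to fix $\sum_i a_i = \|\vp\|_2^2$, and uses concavity/subadditivity of $a \mapsto \sigma^2 a/(a+\sigma^2)$ to prove the global lower bound $g(\|\vp\|_2^2)$ over the relaxed feasible set, then exhibits a realizable configuration attaining it. This buys a cleaner certificate of \emph{global} optimality and makes transparent why the energy must concentrate in a single coordinate, at the cost of the small extra observation that the relaxation (arbitrary nonnegative $a_i$ summing to $\|\vp\|_2^2$) is only used for the lower bound while attainment is verified inside the true feasible set. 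Both proofs agree on the final filter and on why the completion $\vu_2,\ldots,\vu_d$ is arbitrary.
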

\begin{proof}
See Appendix \ref{appendix:proof,mse,solution}.
\end{proof}

Lemma~\ref{lemma:mse,solution} states that if hypothetically we are given the ground truth $\vp$, the optimal denoising process is to first project the noisy observation $\vq$ onto the subspace spanned by $\vp$, then perform a Wiener shrinkage $\|\vp\|^2/ (\|\vp\|^2+\sigma^2)$, and finally re-project the shrinkage coefficients to obtain the denoised estimate. However, since in reality we never have access to the ground truth $\vp$, this optimal result is not achievable.

\subsection{Problem Statement}
Since the oracle optimal filter is not achievable in practice, the question becomes whether it is possible to find a surrogate solution that does not require the ground truth $\vp$.

To answer this question, it is helpful to separate the joint optimization \eref{eq:mse_find U Lambda} by first fixing $\mU$ and minimize the MSE with respect to $\mLambda$. In this case, one can show that \eref{eq:mse} achieves the minimum when
\begin{equation}
\lambda_i = \frac{(\vu_i^T\vp)^2}{(\vu_i^T\vp)^2 + \sigma^2},
\label{eq:mse,lambda_i}
\end{equation}
in which the minimum MSE estimator is given by
\begin{equation}
\vphat = \mU
\left(\diag{\frac{(\vu_1^T\vp)^2}{(\vu_1^T\vp)^2+ \sigma^2},\ldots,\frac{(\vu_d^T\vp)^2}{(\vu_d^T\vp)^2+ \sigma^2}}\right)
\mU^T\vq,
\label{eq:mse,phat}
\end{equation}
where $\{\vu_1,\ldots,\vu_d\}$ are the columns of $\mU$.

Inspecting \eref{eq:mse,phat}, we identify two parts of the problem:
\begin{enumerate}
\item Determine $\mU$. The choice of $\mU$ plays a critical role in the denoising performance. In literature, $\mU$ are typically chosen as the FFT or the DCT bases \cite{Dabov_Foi_Katkovnik_2007, Dabov_Foi_Katkovnik_Egiazarian_2008}. In \cite{Dabov_Foi_Katkovnik_Egiazarian_2009, Zhang_Dong_Zhang_2010, Dong_Zhang_2013}, the PCA bases of various data matrices are proposed. However, the optimality of these bases is not fully understood.
\item Determine $\mLambda$. Even if $\mU$ is fixed, the optimal $\mLambda$ in \eref{eq:mse,lambda_i} still depends on the unknown ground truth $\vp$. In \cite{Dabov_Foi_Katkovnik_2007}, $\mLambda$ is determined by hard-thresholding a stack of DCT coefficients or applying an empirical Wiener filter constructed from a first-pass estimate. In \cite{Zhang_Dong_Zhang_2010}, $\mLambda$ is formed by the PCA coefficients of a set of relevant noisy patches. Again, it is unclear which of these is optimal.
\end{enumerate}

Motivated by the problems about $\mU$ and $\mLambda$, in the following two sections we present our proposed method for each of these problems. We discuss its relationship to prior works, and present ways to further improve it.

\section{Determine $\mU$}
\label{section3}
In this section, we present our proposed method to determine the basis matrix $\mU$ and show that it is a generalization of a number of existing denoising algorithms. We also discuss ways to improve $\mU$.

\subsection{Patch Selection via $k$ Nearest Neighbors}
Given a noisy patch $\vq$ and a targeted database $\{\vp_j\}_{j=1}^n$, our first task is to fetch the $k$ most ``relevant'' patches. The patch selection is performed by measuring the similarity between $\vq$ and each of $\{\vp_j\}_{j=1}^n$, defined as
\begin{equation}
d(\vq,\vp_j) = \|\vq - \vp_j\|_2, \quad \mbox{ for } j = 1,\ldots,n.
\label{eq:knn}
\end{equation}
We note that \eref{eq:knn} is equivalent to the standard $k$ nearest neighbors ($k$NN) search.

$k$NN has a drawback that under the $\ell_2$ distance, some of the $k$ selected patches may not be truly relevant to the denoising task, because the query patch $\vq$ is noisy. We will come back to this issue in Section III-D by discussing methods to improve the robustness of the $k$NN.

\subsection{Group Sparsity}
\label{section3:group sparsity}
Without loss of generality, we assume that the $k$NN returned by the above procedure are the first $k$ patches of the data, i.e., $\{\vp_j\}_{j=1}^k$. Our goal now is to construct $\mU$ from $\{\vp_j\}_{j=1}^k$.

We postulate that a good $\mU$ should have two properties. First, $\mU$ should make the projected vectors $\{\mU^T\vp_j\}_{j=1}^k$ similar in both \emph{magnitude} and \emph{location}. This hypothesis follows from the observation that since $\{\vp_j\}_{j=1}^k$ have small $\ell_2$ distances from $\vq$, it must hold that any $\vp_i$ and $\vp_j$ (hence $\mU^T\vp_i$ and $\mU^T\vp_j$) in the set should also be similar. Second, we require that each projected vector $\mU^T\vp_j$ contains as few non-zeros as possible, \emph{i.e.}, \emph{sparse}. The reason is related to the shrinkage step to be discussed in Section \ref{section4}, because a vector of few non-zero coefficients has higher energy concentration and hence is more effective for denoising.

In order to satisfy these two criteria, we propose to consider the idea of \emph{group sparsity}\footnote{Group sparsity was first proposed by Cotter et al. for group sparse reconstruction \cite{Cotter_Rao_Engan_Kreutz_2005} and later used by Mairal et al. for denoising \cite{Mairal_Bach_Ponce_Sapiro_Zisserman_2009}, but towards a different end from the method presented in this paper.}, which is characterized by the matrix $\ell_{1,2}$ norm, defined as \footnote{In general one can define $\ell_{p,q}$ norm as $\|\mX\|_{p,q} = \sum_{i=1}^d \|\vx_i\|_q^p$, c.f. \cite{Mairal_Bach_Ponce_Sapiro_Zisserman_2009}.}
\begin{equation}
\| \mX \|_{1,2} \bydef \sum_{i=1}^d \| \vx_i \|_2,
\end{equation}
for any matrix $\mX \in \R^{d \times k}$, where $\vx_i \in \R^k$ is the $i$th row of a matrix $\mX$. In words, a small $\| \mX \|_{1,2}$ makes sure that $\mX$ has few non-zero entries, and the non-zero entries are located similarly in each column \cite{Cotter_Rao_Engan_Kreutz_2005,Mairal_Bach_Ponce_Sapiro_Zisserman_2009}. A pictorial illustration is shown in \fref{fig:sparse vs group sparse}.

\begin{figure}[h]
\centering
\begin{tikzpicture}[scale=0.5]
\def\w{0.5}
\def\x{0}
\def\y{0}
\draw[black] (\x,\y) -- (\x, 10*\w);
\draw[step=\w cm,black] (\x,\y) grid ( \x  + \w , 10 * \w);
\draw[fill=black!50, opacity=0.5] (\x,1 * \w) rectangle (\x  + \w,2 * \w);
\draw[fill=black!50, opacity=0.5] (\x,3 * \w) rectangle (\x  + \w,4 * \w);
\draw[fill=black!50, opacity=0.5] (\x,6 * \w) rectangle (\x  + \w,7 * \w);

\def\x{1}
\def\y{0}
\draw[black] (\x,\y) -- (\x, 10*\w);
\draw[step=\w cm,black] (\x,\y) grid ( \x  + \w , 10 * \w);
\draw[fill=black!50, opacity=0.5] (\x,3 * \w) rectangle (\x + \w,4 * \w);
\draw[fill=black!50, opacity=0.5] (\x,7 * \w) rectangle (\x + \w,8 * \w);

\def\x{2}
\def\y{0}
\draw[black] (\x,\y) -- (\x, 10*\w);
\draw[step=\w cm,black] (\x,\y) grid ( \x  + \w , 10 * \w);
\draw[fill=black!50, opacity=0.5] (\x,2 * \w) rectangle (\x + \w,3 * \w);
\draw[fill=black!50, opacity=0.5] (\x,3 * \w) rectangle (\x + \w,4 * \w);
\draw[fill=black!50, opacity=0.5] (\x,7 * \w) rectangle (\x + \w,8 * \w);

\node at (\x+0.5,-1) {(a) sparse};

\def\x{3}
\def\y{0}
\draw[black] (\x,\y) -- (\x, 10*\w);
\draw[step=\w cm,black] (\x,\y) grid ( \x  + \w , 10 * \w);
\draw[fill=black!50, opacity=0.5] (\x,2 * \w) rectangle (\x + \w,3 * \w);
\draw[fill=black!50, opacity=0.5] (\x,6 * \w) rectangle (\x + \w,7 * \w);

\def\x{4}
\def\y{0}
\draw[black] (\x,\y) -- (\x, 10*\w);
\draw[step=\w cm,black] (\x,\y) grid ( \x  + \w , 10 * \w);
\draw[fill=black!50, opacity=0.5] (\x,3 * \w) rectangle (\x + \w,4 * \w);
\draw[fill=black!50, opacity=0.5] (\x,6 * \w) rectangle (\x + \w,7 * \w);

\def\x{5}
\def\y{0}
\draw[black] (\x,\y) -- (\x, 10*\w);
\draw[step=\w cm,black] (\x,\y) grid ( \x  + \w , 10 * \w);
\draw[fill=black!50, opacity=0.5] (\x,2 * \w) rectangle (\x + \w,3 * \w);
\draw[fill=black!50, opacity=0.5] (\x,3 * \w) rectangle (\x + \w,4 * \w);
\draw[fill=black!50, opacity=0.5] (\x,6 * \w) rectangle (\x + \w,7 * \w);

\def\offset{10}
\def\x{\offset + 0}
\def\y{0}
\draw[black] (\x,\y) -- (\x, 10*\w);
\draw[step=\w cm,black] (\x,\y) grid ( \x + \w , 10 * \w);
\draw[fill=black!50, opacity=0.5] (\x,2 * \w) rectangle (\x + \w,3 * \w);
\draw[fill=black!50, opacity=0.5] (\x,3 * \w) rectangle (\x + \w,4 * \w);
\draw[fill=black!50, opacity=0.5] (\x,6 * \w) rectangle (\x + \w,7 * \w);

\def\x{\offset + 1}
\def\y{0}
\draw[black] (\x,\y) -- (\x, 10*\w);
\draw[step=\w cm,black] (\x,\y) grid ( \x + \w , 10 * \w);
\draw[fill=black!50, opacity=0.5] (\x,2 * \w) rectangle (\x + \w,3 * \w);
\draw[fill=black!50, opacity=0.5] (\x,3 * \w) rectangle (\x + \w,4 * \w);
\draw[fill=black!50, opacity=0.5] (\x,6 * \w) rectangle (\x + \w,7 * \w);

\def\x{\offset + 2}
\def\y{0}
\draw[black] (\x,\y) -- (\x, 10*\w);
\draw[step=\w cm,black] (\x,\y) grid ( \x  + \w , 10 * \w);
\draw[fill=black!50, opacity=0.5] (\x,2 * \w) rectangle (\x + \w,3 * \w);
\draw[fill=black!50, opacity=0.5] (\x,3 * \w) rectangle (\x + \w,4 * \w);
\draw[fill=black!50, opacity=0.5] (\x,6 * \w) rectangle (\x + \w,7 * \w);

\node at (\x + 0.8,-1) {(b) group sparse};

\def\x{\offset + 3}
\def\y{0}
\draw[black] (\x,\y) -- (\x, 10*\w);
\draw[step=\w cm,black] (\x,\y) grid ( \x  + \w , 10 * \w);
\draw[fill=black!50, opacity=0.5] (\x,2 * \w) rectangle (\x + \w,3 * \w);
\draw[fill=black!50, opacity=0.5] (\x,3 * \w) rectangle (\x + \w,4 * \w);
\draw[fill=black!50, opacity=0.5] (\x,6 * \w) rectangle (\x + \w,7 * \w);

\def\x{\offset + 4}
\def\y{0}
\draw[black] (\x,\y) -- (\x, 10*\w);
\draw[step=\w cm,black] (\x,\y) grid ( \x  + \w , 10 * \w);
\draw[fill=black!50, opacity=0.5] (\x,2 * \w) rectangle (\x + \w,3 * \w);
\draw[fill=black!50, opacity=0.5] (\x,3 * \w) rectangle (\x + \w,4 * \w);
\draw[fill=black!50, opacity=0.5] (\x,6 * \w) rectangle (\x + \w,7 * \w);

\def\x{\offset + 5}
\def\y{0}
\draw[black] (\x,\y) -- (\x, 10*\w);
\draw[step=\w cm,black] (\x,\y) grid ( \x  + \w , 10 * \w);
\draw[fill=black!50, opacity=0.5] (\x,2 * \w) rectangle (\x + \w,3 * \w);
\draw[fill=black!50, opacity=0.5] (\x,3 * \w) rectangle (\x + \w,4 * \w);
\draw[fill=black!50, opacity=0.8] (\x,6 * \w) rectangle (\x + \w,7 * \w);
\end{tikzpicture}
\caption{Comparison between sparsity (where columns are sparse, but do not coordinate) and group sparsity (where all columns are sparse with similar locations).}
\label{fig:sparse vs group sparse}
\end{figure}


Going back to our problem, we propose to minimize the $\ell_{1,2}$-norm of the matrix $\mU^T\mP$:
\begin{equation}
\begin{array}{ll}
\minimize{\mU}  &\quad \|\mU^T\mP\|_{1,2} \\
\subjectto      &\quad \mU^T\mU = \mI,
\end{array}
\label{eq:determine U,problem}
\end{equation}
where $\mP \bydef [\vp_1,\ldots,\vp_k]$. The equality constraint in \eref{eq:determine U,problem} ensures that $\mU$ is orthonormal. Thus, the solution of \eref{eq:determine U,problem} is an orthonormal matrix $\mU$ which maximizes the group sparsity of the data $\mP$.

Interestingly, and surprisingly, the solution of \eref{eq:determine U,problem} is indeed \emph{identical} to the classical principal component analysis (PCA). The following lemma summarizes the observation.

\begin{lemma}
\label{lemma:determine U,solution}
The solution to \eref{eq:determine U,problem} is that
\begin{equation}
[\mU,\mS] = \mbox{eig}(\mP\mP^T),
\label{eq:determine U,solution}
\end{equation}
where $\mS$ is the corresponding eigenvalue matrix.
\end{lemma}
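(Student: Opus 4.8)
The plan is to reduce this matrix optimization to a scalar majorization problem on the diagonal of a congruent transform of $\mP\mP^T$. First I would expand the objective in terms of the columns $\vu_1,\ldots,\vu_d$ of $\mU$. Since the $i$th row of $\mU^T\mP$ is $\vu_i^T\mP$, the definition of the $\ell_{1,2}$-norm gives
\begin{equation}
\|\mU^T\mP\|_{1,2} = \sum_{i=1}^d \|\vu_i^T\mP\|_2 = \sum_{i=1}^d \sqrt{\vu_i^T\mP\mP^T\vu_i}.
\end{equation}
Writing $\mM \bydef \mP\mP^T$, which is symmetric positive semidefinite with eigenvalues $\lambda_1\ge\cdots\ge\lambda_d\ge 0$, and setting $a_i \bydef \vu_i^T\mM\vu_i \ge 0$, the problem becomes the minimization of the separable concave function $\phi(\va) = \sum_{i=1}^d \sqrt{a_i}$ over all diagonals $\va = (a_1,\ldots,a_d)$ realizable as $\mathrm{diag}(\mU^T\mM\mU)$ by an orthonormal $\mU$.

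The key structural fact is that the constraint $\mU^T\mU=\mI$ forces the trace to be invariant, $\sum_{i=1}^d a_i = \trace{\mU^T\mM\mU} = \trace{\mM}$, so only the \emph{distribution} of mass among the $a_i$ is free. More precisely, by the Schur--Horn theorem the set of diagonal vectors attainable as $\mathrm{diag}(\mU^T\mM\mU)$ over orthogonal $\mU$ is exactly the permutohedron $\{\va : \va \prec \vlambda\}$ of vectors majorized by the eigenvalue vector $\vlambda=(\lambda_1,\ldots,\lambda_d)$. Since $t\mapsto\sqrt{t}$ is concave, $\phi$ is symmetric and concave, hence Schur-concave; and a concave function on a compact polytope attains its minimum at a vertex. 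The vertices of this permutohedron are precisely the permutations of $\vlambda$, and Schur-concavity ($\va\prec\vlambda \Rightarrow \phi(\va)\ge\phi(\vlambda)$) confirms that the minimum over the whole attainable set equals $\phi(\vlambda)$, attained when $\va$ is a permutation of the eigenvalues.

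Finally I would translate $\va=\vlambda$ back into a statement about $\mU$. If the diagonal of $\mB \bydef \mU^T\mM\mU$ equals the eigenvalues of $\mM$ as a multiset, then $\sum_i \mB_{ii}^2 = \sum_i \lambda_i^2 = \|\mB\|_F^2$, which forces every off-diagonal entry of $\mB$ to vanish; thus $\mU^T\mM\mU$ is diagonal and $\mU$ is an eigenvector matrix of $\mM=\mP\mP^T$, i.e. $[\mU,\mS]=\mathrm{eig}(\mP\mP^T)$ as claimed. The step I expect to be the main obstacle is pinning down the attainable set of diagonals together with the correct \emph{direction} of the majorization conclusion: one should invoke Schur--Horn rather than a naive Lagrange-multiplier computation, which is awkward because of the square-root nonsmoothness at $a_i=0$, and one must be careful that \emph{concavity} drives the minimum toward the most ``spread-out'' vertex $\vlambda$ rather than toward the balanced interior point. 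Handling eigenvalue multiplicities, where the minimizer is non-unique up to rotations within each eigenspace, is a minor additional wrinkle.
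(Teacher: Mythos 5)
Your proof is correct, but it takes a genuinely different route from the paper's. The paper argues term by term: it treats each $\minimize{\vu_i} \|\vu_i^T\mP\|_2$ subject to $\vu_i^T\vu_i=1$ in isolation, derives the stationarity condition $\mP\mP^T\vu_i=\beta\vu_i$ from the Lagrangian, concludes each $\vu_i$ is an eigenvector of $\mP\mP^T$, and then remarks that orthogonality of eigenvectors takes care of the coupling constraint. That argument only identifies first-order candidates and silently skips the issue that decoupled minimization would send every $\vu_i$ to the \emph{same} (smallest) eigenvector; it never certifies global optimality of the joint problem. Your route---rewriting the objective as $\sum_i\sqrt{\vu_i^T\mM\vu_i}$ with $\mM=\mP\mP^T$, invoking Schur's majorization of the diagonal of $\mU^T\mM\mU$ by the spectrum, and using Schur-concavity of $\sum_i\sqrt{a_i}$---gives the global lower bound $\sum_i\sqrt{\lambda_i}$ directly and shows it is attained exactly by eigenbases; your Frobenius-norm argument for the converse (that a diagonal equal to the spectrum forces the off-diagonal entries to vanish) is a clean way to characterize \emph{all} minimizers, including the eigenspace-rotation non-uniqueness under repeated eigenvalues. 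In short, you prove strictly more than the paper does (global optimality and a converse), at the cost of importing the Schur--Horn machinery; the paper's Lagrangian computation is more elementary but, as written, has a gap that your argument closes. Only minor points: you do not actually need Horn's converse (attainability of the whole permutohedron), only Schur's direction plus attainability of $\vlambda$ itself by the eigenvector matrix; and the nonsmoothness of $\sqrt{\cdot}$ at $0$ that you flag as an obstacle for the Lagrangian approach is sidestepped entirely by your majorization argument, so it is not a real issue for your proof.
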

\begin{proof}
See Appendix \ref{appendix:proof,determine U,solution}.
\end{proof}

\begin{remark}
In practice, it is possible to improve the fidelity of the data matrix $\mP$ by introducing a diagonal weight matrix
\begin{equation}
\mW = \frac{1}{Z} \diag{e^{-\|\vq - \vp_1\|^2/h^2},\ldots,e^{-\|\vq - \vp_k\|^2/h^2}},
\label{eq:W}
\end{equation}
for some user tunable parameter $h$ and a normalization constant $Z \bydef \vone^T\mW\vone$. Consequently, we can define
\begin{equation}
\overline{\mP} = \mP\mW^{1/2}.
\label{eq:PW}
\end{equation}
Hence \eref{eq:determine U,solution} becomes $[\mU, \mS] = \mbox{eig}(\mP\mW\mP^T)$.
%\begin{equation}
%\begin{array}{ll}
%\minimize{\mU}  &\quad \|\mU^T\mP\mW^{1/2}\|_{1,2} \\
%\subjectto      &\quad \mU^T\mU = \mI,
%\end{array}
%\label{eq:determine U,problem,weigthed}
%\end{equation}
%of which the solution is given by
%\begin{equation}
%[\mU, \mS] = \mbox{eig}(\mP\mW\mP^T).
%\label{eq:determine U,solution,weighted}
%\end{equation}
\end{remark}

\subsection{Relationship to Prior Works}
The fact that \eref{eq:determine U,solution} is the solution to a group sparsity minimization problem allows us to understand the performance of a number of existing denoising algorithms to some extent.

\subsubsection{BM3D \cite{Dabov_Foi_Katkovnik_2007}}
It is perhaps a misconception that the underlying principle of BM3D is to enforce sparsity of the 3-dimensional data volume (which we shall call it a 3-way tensor). However, what BM3D enforces is the \emph{group sparsity} of the slices of the tensor, not the sparsity of the tensor.

To see this, we note that the 3-dimensional transforms in BM3D are separable (\emph{e.g.}, DCT2 + Haar in its default setting). If the patches $\vp_1,\ldots,\vp_k$ are sufficiently similar, the DCT2 coefficients will be similar in \emph{both} magnitude and location \footnote{By DCT2 location we meant the frequency of the DCT2 components.}. Therefore, by fixing the frequency location of a DCT2 coefficient and tracing the DCT2 coefficients along the third axis, the output signal will be almost flat. Hence, the final Haar transform will return a sparse vector. Clearly, such sparsity is based on the stationarity of the DCT2 coefficients along the third axis. In essence, this is group sparsity.

\subsubsection{HOSVD \cite{Rajwade_Rangarajan_Banerjee_2013}}
The true tensor sparsity can only be utilized by the high order singular value decomposition (HOSVD), which is recently studied in \cite{Rajwade_Rangarajan_Banerjee_2013}. Let $\calP \in \R^{\sqrt{d}\times\sqrt{d}\times k}$ be the tensor by stacking the patches $\vp_1,\ldots,\vp_k$ into a 3-dimensional array, HOSVD seeks three orthonormal matrices $\mU^{(1)} \in \R^{\sqrt{d}\times\sqrt{d}}$, $\mU^{(2)} \in \R^{\sqrt{d}\times\sqrt{d}}$, $\mU^{(3)} \in \R^{k \times k}$ and an array $\calS \in \R^{\sqrt{d}\times\sqrt{d}\times k}$, such that
\begin{equation*}
\calS = \calP \times_1 \mU^{(1)^T} \times_2 \mU^{(2)^T} \times_3 \mU^{(3)^T},
\end{equation*}
where $\times_k$ denotes a tensor mode-$k$ multiplication \cite{Kolda_Bader_2009}.

As reported in \cite{Rajwade_Rangarajan_Banerjee_2013}, the performance of HOSVD is indeed worse than BM3D. This phenomenon can now be explained, because HOSVD ignores the fact that image patches tend to be group sparse instead of being tensor sparse.

\subsubsection{Shape-adaptive BM3D \cite{Dabov_Foi_Katkovnik_Egiazarian_2008}}
As a variation of BM3D, SA-BM3D groups similar patches according to a shape-adaptive mask. Under our proposed framework, this shape-adaptive mask can be modeled as a spatial weight matrix $\mW_s \in \R^{d \times d}$ (where the subscript $s$ denotes \emph{spatial}). Adding $\mW_s$ to \eref{eq:PW}, we define
\begin{equation}
\overline{\mP} = \mW_s^{1/2} \mP \mW^{1/2}.
\end{equation}
Consequently, the PCA of $\overline{\mP}$ is equivalent to SA-BM3D. Here, the matrix $\mW_s$ is used to control the relative emphasis of each pixel in the spatial coordinate.

\subsubsection{BM3D-PCA \cite{Dabov_Foi_Katkovnik_Egiazarian_2009} and LPG-PCA \cite{Zhang_Dong_Zhang_2010}}
The idea of both BM3D-PCA and LPG-PCA is that given $\vp_1,\ldots,\vp_k$, $\mU$ is determined as the principal components of $\mP = [\vp_1,\ldots,\vp_k]$. Incidentally, such approaches arrive at the same result as \eref{eq:determine U,solution}, \emph{i.e.}, the principal components are indeed the solution of a group sparse minimization. However, the key of using the group sparsity is not noticed in  \cite{Dabov_Foi_Katkovnik_Egiazarian_2009} and \cite{Zhang_Dong_Zhang_2010}. This provides additional theoretical justifications for both methods.

\subsubsection{KSVD \cite{Elad_Aharon_2006}} In KSVD, the dictionary plays the role of our basis matrix $\mU$. The dictionary can be trained either from the single noisy image, or from an external (generic or targeted) database. However, the training is performed once for \emph{all} patches of the image. In other words, the noisy patches share a \emph{common} dictionary. In our proposed method, \emph{each} noisy patch has an individually trained basis matrix. Clearly, the latter approach, while computationally more expensive, is significantly more data adaptive than KSVD.

\begin{figure*}[th]


\centering
\def\iw{0.3\linewidth}
\begin{tabular}{cccc}
\hspace{-8ex}\includegraphics[width=\iw]{/code/fig2/groundTruth.eps}
&\hspace{-15ex}\includegraphics[width=\iw]{/code/fig2/noRefinement.eps}
&\hspace{-4ex} \includegraphics[width=\iw]{/code/fig2/crossSimilarity.eps}
&\hspace{-4ex} \includegraphics[width=\iw]{/code/fig2/basicEstimate.eps}\\
\hspace{-12ex}(a) $\vp$& \hspace{-15ex}(b) $\varphi(\vx)=0$& \hspace{-4ex}(c) $\varphi(\vx)=\vone^T\mB\vx$& \hspace{-4ex}(d) $\varphi(\vx)=\ve^T\vx$\\
\end{tabular}
\caption{Refined patch matching results: (a) ground truth, (b) 10 best reference patches using $\vq$ ($\sigma=50$), (c) 10 best reference patches using $\varphi(\vx)=\vone^T\mB\vx$ (where $\tau = 1/(2n)$), (d) 10 best reference patches using $\varphi(\vx)=\ve^T\vx$ (where $\tau = 1$).}
\label{fig:improved_patch_matching}
\end{figure*}

\subsection{Improvement: Patch Selection Refinement}
\label{improved U}
The optimization problem \eref{eq:determine U,problem} suggests that the $\mU$ computed from \eref{eq:determine U,solution} is the optimal basis with respect to the reference patches $\{\vp_j\}_{j=1}^k$. However, one issue that remains is how to improve the selection of $k$ patches from the original $n$ patches. Our proposed approach is to formulate the patch selection as an optimization problem
\begin{equation}
\begin{array}{ll}
\minimize{\vx}  &\quad \vc^T\vx + \tau \varphi(\vx)\\
\subjectto      &\quad \vx^T\vec{1} = k, \quad 0 \leq \vx \leq 1,
\end{array}
\label{eq:l2-based knn, penalty}
\end{equation}
where $\vc = [c_1,\cdots,c_n]^T$ with $c_j \bydef \|\vq - \vp_j\|_2$, $\varphi(\vx)$ is a penalty function and $\tau > 0$ is a parameter. In \eref{eq:l2-based knn, penalty}, each $c_j$ is the distance $\|\vq - \vp_j\|_2$, and $x_j$ is a weight indicating the emphasis of $\|\vq - \vp_j\|_2$. Therefore, the minimizer of \eref{eq:l2-based knn, penalty} is a sequence of weights that minimize the overall distance.

To gain more insight into \eref{eq:l2-based knn, penalty}, we first consider the special case when the penalty term $\varphi(\vx) = 0$. We claim that, under this special condition, the solution of \eref{eq:l2-based knn, penalty} is equivalent to the original $k$NN solution in \eref{eq:knn}. This result is important, because $k$NN is a fundamental building block of all patch-based denoising algorithms. By linking $k$NN to the optimization formulation in \eref{eq:l2-based knn, penalty} we provide a systematic strategy to improve the $k$NN.

The proof of the equivalence between $k$NN and \eref{eq:l2-based knn, penalty} can be understood via the following case study where $n=2$ and $k=1$. In this case, the constraints $\vx^T\vec{1} = 1$ and $0 \le \vx \le 1$ form a closed line segment in the positive quadrant. Since the objective function $\vc^T\vx$ is linear, the optimal point must be at one of the vertices of the line segment, which is either $\vx = [0,1]^T$, or $\vx = [1,0]^T$. Thus, by checking which of $c_1$ or $c_2$ is smaller, we can determine the optimal solution by setting $x_1 = 1$ if $c_1$ is smaller (and vice versa). Correspondingly, if $x_1=1$, then the first patch $\vp_1$ should be selected. Clearly, the solution returned by the optimization is exactly the $k$NN solution. A similar argument holds for higher dimensions, hence justifies our claim.

Knowing that $k$NN can be formulated as \eref{eq:l2-based knn, penalty}, our next task is to choose an appropriate penalty term. The following are two possible choices.

\subsubsection{Regularization by Cross Similarity}
The first choice of $\varphi(\vx)$ is to consider $\varphi(\vx) = \vx^T\mB \vx$, where $\mB \in \R^{n \times n}$ is a symmetric matrix with $B_{ij}\defequal \|\vp_i-\vp_j\|_2$. Writing \eref{eq:l2-based knn, penalty} explicitly, we see that the optimization problem \eref{eq:l2-based knn, penalty} becomes
\begin{equation}
\minimize{0 \le \vx \le 1,\, \vx^T\vone = k} \;\; \sum_{j} x_j \|\vq - \vp_j\|_2 + \tau \sum_{i,j} x_ix_j \|\vp_i - \vp_j\|_2.
\label{eq:l2-based knn, penalty1}
\end{equation}
The penalized problem \eref{eq:l2-based knn, penalty1} suggests that the optimal $k$ reference patches should not be determined merely from $\|\vq - \vp_j\|_2$ (which could be problematic due to the noise present in $\vq$). Instead, a good reference patch should also be similar to all other patches that are selected. The cross similarity term $x_i x_j \|\vp_i - \vp_j\|_2$ provides a way for such measure. This shares some similarities to the patch ordering concept proposed by Cohen and Elad \cite{Ram_Elad_Cohen_2013}. The difference is that the patch ordering proposed in \cite{Ram_Elad_Cohen_2013} is a shortest path problem that tries to organize the noisy patches, whereas ours is to solve a regularized optimization.

Problem \eref{eq:l2-based knn, penalty1} is in general not convex because the matrix $\mB$ is not positive semidefinite. One way to relax the formulation is to consider $\varphi(\vx) = \vone^T\mB\vx$. Geometrically, the solution of using $\varphi(\vx) = \vone^T\mB\vx$ tends to identify patches that are close to the \emph{sum} of all other patches in the set. In many cases, this is similar to $\varphi(\vx) = \vx^T\mB \vx$ which finds patches that are similar to every \emph{individual} patch in the set. In practice, we find that the difference between $\varphi(\vx) = \vx^T\mB \vx$ and $\varphi(\vx) = \vone^T\mB\vx$ in the final denoising result (PSNR of the entire image) is marginal. Thus, for computational efficiency we choose $\varphi(\vx) = \vone^T\mB\vx$.

\subsubsection{Regularization by First-pass Estimate}
The second choice of $\varphi(\vx)$ is based on a \emph{first-pass estimate} $\vpbar$ using some denoising algorithms, for example, BM3D or the proposed method without this patch selection step. In this case, by defining $e_j \bydef \|\vpbar - \vp_j\|_2$ we consider the penalty function $\varphi(\vx) = \ve^T\vx$, where $\ve = [e_1, \cdots, e_n]^T$. This implies the following optimization problem
\begin{equation}
\begin{array}{ll}
\minimize{0 \le \vx \le 1,\, \vx^T\vone = k} \;\; \sum_{j} x_j \|\vq - \vp_j\|_2 + \tau \sum_{j} x_j \|\vpbar - \vp_j\|_2.
\end{array}
\label{eq:l2-based knn, penalty2}
\end{equation}
By identifying the objective of \eref{eq:l2-based knn, penalty2} as $(\vc+\tau \ve)^T\vx$, we observe that \eref{eq:l2-based knn, penalty2} can be solved in closed form by locating the $k$ smallest entries of the vector $\vc + \tau \ve$.

The interpretation of \eref{eq:l2-based knn, penalty2} is straight-forward: The linear combination of $\|\vq - \vp_j\|_2$ and $\|\vpbar - \vp_j\|_2$ shows a competition between the noisy patch $\vq$ and the first-pass estimate $\vpbar$. In most of the common scenarios, $\|\vq - \vp_j\|_2$ is preferred when noise level is low, whereas $\vpbar$ is preferred when noise level is high. This in turn requires a good choice of $\tau$. Empirically, we find that $\tau = 0.01$ when $\sigma < 30$ and $\tau = 1$ when $\sigma > 30$ is a good balance between the performance and generality.

\begin{figure}[t]
\vspace{-5ex}
\centering
\def\iw{0.3\linewidth}
\begin{tabular}{cccc}
              \includegraphics[trim = 0mm 0mm 0.5mm 0.5mm, clip, width=\iw]{/code/fig3/clean.eps}
&\hspace{-6ex}\includegraphics[trim = 0mm 0mm 0.5mm 0.5mm, clip, width=\iw]{/code/fig3/denoised_phi0.eps}
&\hspace{-6ex}\includegraphics[trim = 0mm 0mm 0.5mm 0.5mm, clip, width=\iw]{/code/fig3/denoised_phi1.eps}
&\hspace{-6ex}\includegraphics[trim = 0mm 0mm 0.5mm 0.5mm, clip, width=\iw]{}\\
 \hspace{-6ex}  \mbox{\small $\vp$}
&\hspace{-12ex} \mbox{\small $\varphi(\vx)=0$ }
&\hspace{-12ex} \mbox{\small $\varphi(\vx)=\vone^T\mB\vx$ }
&\hspace{-12ex} \mbox{\small $\varphi(\vx)=\ve^T\vx$ }\\
 \hspace{-6ex}  \mbox{\small Ground Truth }
&\hspace{-12ex} \mbox{\small 28.29 dB }
&\hspace{-12ex} \mbox{\small 28.50 dB }
&\hspace{-12ex} \mbox{\small 29.30 dB }
\end{tabular}
\caption{Denoising results: A ground truth patch cropped from an image, and the denoised patches of using different improvement schemes. Noise standard deviation is $\sigma = 50$. $\tau = 1/(2n)$ for $\varphi(\vx)=\vone^T\mB\vx$ and $\tau = 1$ for $\varphi(\vx)=\ve^T\vx$. }
\label{fig:denoised,improved_patch_matching}
\end{figure}

\begin{figure}[t]
\centering
\includegraphics[width=1\linewidth]{}
\caption{Denoising results of three patch selection improvement schemes. The PSNR value is computed from a $432 \times 381$ image. }
\label{fig:noRefinement_crossSimilarity_basicEstimate_Cone}
\end{figure}

\subsubsection{Comparisons}
To demonstrate the effectiveness of the two proposed patch selection steps, we consider a ground truth (clean) patch shown in \fref{fig:improved_patch_matching} (a). From a pool of $n = 200$ reference patches, we apply an exhaustive search algorithm to choose $k = 40$ patches that best match with the noisy observation $\vq$, where the first 10 patches are shown in \fref{fig:improved_patch_matching} (b). The results of the two selection refinement methods are shown in \fref{fig:improved_patch_matching} (c)-(d), where in both cases the parameter $\tau$ is adjusted for the best performance. For the case of $\varphi(\vx) = \vone^T\mB\vx$, we set $\tau = 1/(200n)$ when $\sigma < 30$ and $\tau = 1/(2n)$ when $\sigma > 30$. For the case of $\varphi(\vx) = \ve^T\vx$, we use the denoised result of BM3D as the first-pass estimate $\vpbar$, and set $\tau = 0.01$ when $\sigma < 30$ and $\tau = 1$ when $\sigma > 30$. The results in \fref{fig:denoised,improved_patch_matching} show that the PSNR increases from 28.29 dB to 28.50 dB if we use $\varphi(\vx) = \vone^T\mB\vx$, and further increases to 29.30 dB if we use $\varphi(\vx) = \ve^T\vx$. The full performance comparison is shown in \fref{fig:noRefinement_crossSimilarity_basicEstimate_Cone}, where we show the PSNR curve for a range of noise levels of an image. Since the performance of $\varphi(\vx) = \ve^T\vx$ is consistently better than $\varphi(\vx) = \vone^T\mB\vx$, in the rest of the paper we focus on $\varphi(\vx) = \ve^T\vx$.

\section{Determine $\mLambda$}
\label{section4}
In this section, we present our proposed method to determine $\mLambda$ for a fixed $\mU$. Our proposed method is based on the concept of a Bayesian MSE estimator.

%Having defined $\mU$, our next step is to determine $\mLambda$. Following the arguments in \eref{eq:mse,lambda_i}, we observe that the optimal $\mLambda$ for a fixed $\mU$ is a function in the unknown ground truth $\vp$. Since we do not have access to $\vp$, we have to estimate the optimal $\mLambda$ based on the prior distributions of $\vp$ derived from the reference patches. In what follows, we present one possible way to achieving this goal using the Bayesian MMSE estimator approach.

\subsection{Bayesian MSE Estimator}
Recall that the noisy patch is related to the latent clean patch as $\vq = \vp + \veta$, where $\veta \overset{\mathrm{\scriptsize{iid}}}{\sim} \calN(\vzero,\sigma^2\mI)$ denotes the noise. Therefore, the conditional distribution of $\vq$ given $\vp$ is
\begin{equation}
f(\vq \,|\, \vp) = \calN(\vp, \, \sigma^2\mI).
\end{equation}
Assuming that the prior distribution $f(\vp)$ is known, it is natural to consider the Bayesian mean squared error (BMSE) between the estimate $\vphat \bydef \mU\mLambda\mU^T\vq$ and the ground truth $\vp$:
\begin{equation}
\mathrm{BMSE} \bydef \E_{\vp}\left[ \E_{\vq|\vp}\left[ \left\| \vphat - \vp \right\|_2^2 \;\;\middle|\;\; \vp \right]\right].
\label{eq:bayesian MSE}
\end{equation}
Here, the subscripts remark the distributions under which the expectations are taken.

The BMSE defined in \eref{eq:bayesian MSE} suggests that the optimal $\mLambda$ should be the minimizer of the optimization problem
\begin{equation}
\mLambda = \argmin{\mLambda} \;\; \E_{\vp}\left[ \E_{\vq|\vp}\left[ \left\| \mU\mLambda\mU^T\vq - \vp \right\|_2^2 \;\;\middle|\;\; \vp \right]\right].
\label{eq:bayesian MSE Lambda}
\end{equation}
In the next subsection we discuss how to solve \eref{eq:bayesian MSE Lambda}.

\subsection{Localized Prior from the Targeted Database}
Minimizing BMSE over $\mLambda$ involves knowing the prior distribution $f(\vp)$. However, in general, the exact form of $f(\vp)$ is never known. This leads to many popular models in the literature, \emph{e.g.}, Gaussian mixture model \cite{Yu_Sapiro_Mallat_2012}, the field of expert model \cite{Roth_Black_2005, Roth_Black_2009}, and the expected patch log-likelihood model (EPLL) \cite{Zoran_Weiss_2011,Zoran_Weiss_2012}.

One common issue of all these models is that the prior $f(\vp)$ is built from a generic database of patches. In other words, the $f(\vp)$ models \emph{all} patches in the database. As a result, $f(\vp)$ is often a high dimensional distribution with complicated shapes.

In our targeted database setting, the difficult prior modeling becomes a much simpler task. The reason is that while the shape of the distribution $f(\vp)$ is still unknown, the subsampled reference patches (which are few but highly representative) could be well approximated as samples drawn from a single Gaussian centered around some mean $\vmu$ and covariance $\mSigma$. Therefore, by appropriately estimating $\vmu$ and $\mSigma$ of this \emph{localized} prior, we can derive the optimal $\mLambda$ as given by the following Lemma:

\begin{lemma}
\label{lemma:bayesian mse,solution}
Let $f(\vq \,|\, \vp) = \calN(\vp, \sigma^2\mI)$, and let $f(\vp) = \calN(\vmu, \mSigma)$ for any vector $\vmu$ and matrix $\mSigma$, then the optimal $\mLambda$ that minimizes \eref{eq:bayesian MSE} is
\begin{equation}
\label{eq:bayesian MSE, optimal solution}
\mLambda = \left(\mathrm{diag}(\mG+\sigma^2\mI)\right)^{-1} \mathrm{diag}(\mG),
\end{equation}
where $\mG \bydef \mU^T\vmu\vmu^T\mU+ \mU^T\mSigma\mU$.
\end{lemma}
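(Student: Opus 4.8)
The plan is to collapse the nested expectation onto the single-expectation formula already proved in Lemma~\ref{lemma:mse}, then average that expression over the Gaussian prior, and finally exploit separability of the resulting objective to optimize each diagonal entry of $\mLambda$ independently. The key realization is that the inner conditional expectation is exactly the quantity that Lemma~\ref{lemma:mse} evaluates, so essentially all of the work reduces to computing one second moment.

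First I would note that conditioning on $\vp$ leaves $\vq = \vp + \veta$ with $\veta \sim \calN(\vzero, \sigma^2\mI)$, which is precisely the setting of Lemma~\ref{lemma:mse}. Applying that lemma to the inner expectation gives $\E_{\vq\mid\vp}[\|\mU\mLambda\mU^T\vq - \vp\|_2^2 \mid \vp] = \sum_{i=1}^d [(1-\lambda_i)^2 (\vu_i^T\vp)^2 + \sigma^2\lambda_i^2]$, where the $\sigma^2\lambda_i^2$ terms are already independent of $\vp$. Pushing the outer expectation $\E_{\vp}$ through the finite sum, the only $\vp$-dependent piece is $\E_{\vp}[(\vu_i^T\vp)^2]$.

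Next I would evaluate this second moment. Writing $(\vu_i^T\vp)^2 = \vu_i^T \vp\vp^T \vu_i$ and using the identity $\E_{\vp}[\vp\vp^T] = \mSigma + \vmu\vmu^T$ for $\vp \sim \calN(\vmu,\mSigma)$, I obtain $\E_{\vp}[(\vu_i^T\vp)^2] = \vu_i^T(\vmu\vmu^T + \mSigma)\vu_i$, which is exactly the $i$th diagonal entry $G_{ii}$ of $\mG = \mU^T\vmu\vmu^T\mU + \mU^T\mSigma\mU$. Thus the BMSE collapses to the separable scalar form $\sum_{i=1}^d [(1-\lambda_i)^2 G_{ii} + \sigma^2\lambda_i^2]$. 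Since the $\lambda_i$ decouple, I would minimize each term by differentiation: setting $-2(1-\lambda_i)G_{ii} + 2\sigma^2\lambda_i = 0$ yields $\lambda_i = G_{ii}/(G_{ii}+\sigma^2)$, and assembling these into a diagonal matrix gives $\mLambda = (\mathrm{diag}(\mG + \sigma^2\mI))^{-1}\mathrm{diag}(\mG)$, as claimed.

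The computation is otherwise routine, so the only place demanding care is the second-moment identification $\E_{\vp}[(\vu_i^T\vp)^2] = G_{ii}$. The main obstacle, such as it is, lies in remembering to retain the mean contribution $\vmu\vmu^T$ alongside the covariance $\mSigma$ rather than using only $\mSigma$; this is precisely why $\mG$ must carry both the $\mU^T\vmu\vmu^T\mU$ and the $\mU^T\mSigma\mU$ terms. Everything else follows from separability and a one-line stationarity condition.
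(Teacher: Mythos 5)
Your proof is correct and follows essentially the same route as the paper: both reduce the BMSE to the separable form $\sum_i\left[(1-\lambda_i)^2 g_i + \sigma^2\lambda_i^2\right]$ with $g_i = \vu_i^T(\vmu\vmu^T+\mSigma)\vu_i$ and then minimize termwise. The only cosmetic difference is that you invoke Lemma~\ref{lemma:mse} for the inner conditional expectation and the identity $\E[\vp\vp^T]=\mSigma+\vmu\vmu^T$, whereas the paper recomputes the inner expectation directly and phrases the second-moment step via $\E[\vx^T\mA\vx]=\E[\vx]^T\mA\E[\vx]+\mathrm{Tr}(\mA\mSigma_x)$ --- the same calculation in different notation.
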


\begin{proof}
See Appendix \ref{appendix:proof,bayesian mse,solution}.
\end{proof}

To specify $\vmu$ and $\mSigma$, we let
\begin{equation}
\label{eq:local prior estimators}
\vmu=\sum\limits_{j=1}^{k}w_j\vp_j, \quad \mSigma=\sum\limits_{j=1}^{k}w_j(\vp_j-\vmu)(\vp_j-\vmu)^T,
\end{equation}
where $w_j$ is the $j$th diagonal entry of $\mW$ defined in \eref{eq:W}. Intuitively, an interpretation of \eref{eq:local prior estimators} is that $\vmu$ is the non-local mean of the reference patches. However, the more important part of \eref{eq:local prior estimators} is $\mSigma$, which measures the \emph{uncertainty} of the reference patches with respect to $\vmu$. This uncertainty measure makes some fundamental improvements to existing methods which will be discussed in Section IV-C.

We note that Lemma~\ref{lemma:bayesian mse,solution} holds even if $f(\vp)$ is not Gaussian. In fact, for any distribution $f(\vp)$ with the first cumulant $\vmu$ and the second cumulant $\mSigma$, the optimal solution in \eref{eq:bayesian MSE, optimal solution} still holds. This result is equivalent to the classical linear minimum MSE (LMMSE) estimation \cite{Kay_1998}.

From a computational perspective, $\vmu$ and $\mSigma$ defined in  \eref{eq:local prior estimators} lead to a very efficient implementation as illustrated by the following lemma.
\begin{lemma}
\label{lemma:bayesian mse,solution with estimators}
Using $\vmu$ and $\mSigma$ defined in \eref{eq:local prior estimators}, the optimal $\mLambda$ is given by
\begin{equation}
\label{eq:solution for mLambda}
\mLambda = \left(\mathrm{diag}(\mS + \sigma^2\mI)\right)^{-1} \mathrm{diag}(\mS),
\end{equation}
where $\mS$ is the eigenvalue matrix of $\mP\mW\mP^T$.
\end{lemma}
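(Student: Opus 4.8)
The plan is to reduce this lemma to Lemma~\ref{lemma:bayesian mse,solution}, which already gives $\mLambda = \left(\mathrm{diag}(\mG+\sigma^2\mI)\right)^{-1}\mathrm{diag}(\mG)$ with $\mG = \mU^T\vmu\vmu^T\mU + \mU^T\mSigma\mU$. Comparing this with the target expression \eref{eq:solution for mLambda}, the entire task is to show that, under the localized estimators \eref{eq:local prior estimators} and the basis $\mU$ produced by Lemma~\ref{lemma:determine U,solution}, the matrix $\mG$ is exactly the diagonal eigenvalue matrix $\mS$ of $\mP\mW\mP^T$. Once $\mG = \mS$, both $\mathrm{diag}(\mG) = \mS$ and $\mathrm{diag}(\mG+\sigma^2\mI) = \mS+\sigma^2\mI$ hold trivially (the arguments are already diagonal), so substituting into Lemma~\ref{lemma:bayesian mse,solution} yields \eref{eq:solution for mLambda} at once.

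First I would establish the key algebraic identity $\vmu\vmu^T + \mSigma = \mP\mW\mP^T$. Writing $\vmu = \sum_{j=1}^k w_j \vp_j$ and expanding $\mSigma = \sum_{j=1}^k w_j(\vp_j-\vmu)(\vp_j-\vmu)^T$ term by term gives $\mSigma = \sum_j w_j \vp_j\vp_j^T - (\sum_j w_j\vp_j)\vmu^T - \vmu(\sum_j w_j\vp_j)^T + (\sum_j w_j)\vmu\vmu^T$. Using $\sum_j w_j\vp_j = \vmu$ together with the normalization $\sum_j w_j = 1$ (which holds because $\mW$ is scaled by $Z = \vone^T\mW\vone$ in \eref{eq:W}), the three correction terms collapse to $-\vmu\vmu^T$, so $\mSigma = \sum_j w_j\vp_j\vp_j^T - \vmu\vmu^T$. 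Adding $\vmu\vmu^T$ and recognizing $\sum_j w_j\vp_j\vp_j^T = \mP\mW\mP^T$ (since $\mW$ is diagonal and $\mP = [\vp_1,\ldots,\vp_k]$) gives the claimed identity.

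With this identity in hand, $\mG = \mU^T(\vmu\vmu^T+\mSigma)\mU = \mU^T(\mP\mW\mP^T)\mU$. Next I would invoke the fact, from the Remark following Lemma~\ref{lemma:determine U,solution}, that $\mU$ is precisely the eigenvector matrix of $\mP\mW\mP^T$, i.e. $\mP\mW\mP^T = \mU\mS\mU^T$ with $\mU^T\mU = \mI$. Substituting, $\mG = \mU^T\mU\mS\mU^T\mU = \mS$, which is diagonal, completing the reduction outlined above.

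The step I expect to carry the real content is the identity $\vmu\vmu^T+\mSigma = \mP\mW\mP^T$; the only subtlety is ensuring the weights sum to one so that the cross terms cancel cleanly, which is exactly what the normalization by $Z$ provides. The remaining part — that the \emph{same} $\mU$ simultaneously diagonalizes $\mG$ — is not coincidental but the crucial design link between Sections~III and~IV: the basis chosen to maximize the group sparsity of $\mP$ is also the one that renders the localized second-moment matrix diagonal, which is precisely what makes the closed form \eref{eq:solution for mLambda} and its efficient implementation possible.
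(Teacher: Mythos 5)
Your proposal is correct and follows essentially the same route as the paper's proof: establish the identity $\vmu\vmu^T + \mSigma = \mP\mW\mP^T$ (the paper does this in matrix form via $\mSigma = (\mP-\vmu\vone^T)\mW(\mP-\vmu\vone^T)^T$ together with $\vone^T\mW\vone = 1$, which is the same cancellation you perform term by term), then use the eigendecomposition $\mP\mW\mP^T = \mU\mS\mU^T$ and orthonormality of $\mU$ to get $\mG = \mS$, and conclude by Lemma~\ref{lemma:bayesian mse,solution}. No gaps.
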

\begin{proof}
See Appendix \ref{appendix:proof,bayesian mse,solution with estimators}.
\end{proof}

Combining Lemma \ref{lemma:bayesian mse,solution with estimators} with Lemma \ref{lemma:determine U,solution}, we observe that for any set of reference patches $\{\vp_j\}_{j=1}^k$, $\mU$ and $\mLambda$ can be determined \emph{simultaneously} through the eigen-decomposition of $\mP\mW\mP^T$. Therefore, we arrive at the overall algorithm shown in Algorithm \ref{alg:overalAlgorithm}.
\begin{algorithm}[!]
\caption{Proposed Algorithm}
\label{alg:overalAlgorithm}
\begin{algorithmic}
\STATE Input: Noisy patch $\vq$, noise variance $\sigma^2$, and clean reference patches $\vp_1,\ldots,\vp_k$
\STATE Output: Estimate $\vphat$
\STATE Learn $\mU$ and $\mLambda$
\begin{itemize}
\item Form data matrix $\mP$ and weight matrix $\mW$
\item Compute eigen-decomposition $[\mU,\mS] = \mbox{eig}(\mP\mW\mP^T)$
\item Compute $\mLambda = \left(\mathrm{diag}(\mS + \sigma^2\mI)\right)^{-1} \mathrm{diag}(\mS)$
\end{itemize}
\STATE Denoise: $\vphat = \mU\mLambda\mU^T \vq$.
\end{algorithmic}
\end{algorithm}

\subsection{Relationship to Prior Works}
It is interesting to note that many existing patch-based denoising algorithms assume some notions of prior, either explicitly or implicitly. In this subsection, we mention a few of the important ones. For notational simplicity, we will focus on the $i$th diagonal entry of $\mLambda = \diag{\lambda_1,\ldots,\lambda_d}$.

\subsubsection{BM3D \cite{Dabov_Foi_Katkovnik_2007}, Shape-Adaptive BM3D \cite{Dabov_Foi_Katkovnik_Egiazarian_2008} and BM3D-PCA \cite{Dabov_Foi_Katkovnik_Egiazarian_2009}
}
BM3D and its variants have two denoising steps. In the first step, the algorithm applies a basis matrix $\mU$ (either a pre-defined basis such as DCT, or a basis learned from PCA). Then, it applies a hard-thresholding to the projected coefficients to obtain a filtered image $\vpbar$. In the second step, the filtered image $\vpbar$ is used as a pilot estimate to the desired spectral component
\begin{equation}
\lambda_i = \frac{(\vu_i^T \vpbar)^2}{(\vu_i^T \vpbar)^2 + \sigma^2}.
\label{eq:bayesian,compare with bm3d}
\end{equation}

Following our proposed Bayesian framework, we observe that the role of using $\vpbar$ in \eref{eq:bayesian,compare with bm3d} is equivalent to assuming a dirac delta prior
\begin{equation}
f(\vp) = \delta(\vp - \vpbar).
\label{eq:bayesian,compare with bm3d, fp}
\end{equation}
In other words, the prior that BM3D assumes is concentrated at one point, $\vpbar$, and there is no measure of uncertainty. As a result, the algorithm becomes highly sensitive to the first-pass estimate. In contrast, \eref{eq:local prior estimators} suggests that the first-pass estimate can be defined as a non-local mean solution. Additionally, we incorporate a covariance matrix $\mSigma$ to measure the uncertainty of observing $\vmu$. These provide a more robust estimate to the denoising algorithm which is absent from BM3D and its variants.

\subsubsection{LPG-PCA \cite{Zhang_Dong_Zhang_2010}}
In LPG-PCA, the $i$th spectral component $\lambda_i$ is defined as
\begin{equation}
\label{eq:bayesian,compare with LPG-PCA}
\lambda_i = \frac{(\vu_i^T \vq)^2 - \sigma^2}{(\vu_i^T \vq)^2},
\end{equation}
where $\vq$ is the noisy patch. The (implicit) assumption in \cite{Zhang_Dong_Zhang_2010} is that $(\vu_i^T\vq)^2 \approx (\vu_i^T\vp)^2+\sigma^2$, and so by substituting $(\vu_i^T\vp)^2 \approx (\vu_i^T\vq)^2-\sigma^2$ into \eref{eq:mse,lambda_i} yields \eref{eq:bayesian,compare with LPG-PCA}. However, the assumption implies the existence of a perturbation $\Delta \vp$ such that $(\vu_i^T\vq)^2 = (\vu_i^T(\vp+\Delta\vp))^2+\sigma^2$. Letting $\vpbar = \vp+\Delta\vp$, we see that LPG-PCA implicitly assumes a dirac prior as in \eref{eq:bayesian,compare with bm3d} and \eref{eq:bayesian,compare with bm3d, fp}. The denoising result depends on the magnitude of $\Delta \vp$.

\subsubsection{Generic Global Prior \cite{Levin_Nadler_Durand_Freeman_2012}}
As a comparison to methods using generic databases such as \cite{Levin_Nadler_Durand_Freeman_2012}, we note that the key difference lies in the usage of a \emph{global} prior versus a \emph{local} prior. \fref{fig:generic_targeted_prior} illustrates the concept pictorially. The generic (global) prior $f(\vp)$ covers the entire space, whereas the targeted (local) prior is concentrated at its mean. The advantage of the local prior is that it allows one to denoise an image with few reference patches. It saves us from the intractable computation of learning the global prior, which is a high-dimensional non-parametric function.
\newcommand\gauss[2]{1/(#2*sqrt(2*pi))*exp(-((x-#1)^2)/(2*#2^2))}


\begin{figure}
\centering
\hspace{5ex}
\begin{tikzpicture}[thick, scale=1]
\begin{axis}[every axis plot post/.append style={
  mark=none,samples=100,smooth},
clip=false,
xmin = 0,
xmax = 16,
ymin = 0,
ymax = 1.5,
axis y line=left,
axis x line=bottom,
axis line style = very thick,
xtick=\empty,
ytick=\empty,
]
\addplot [mark = none, smooth, very thick,black!50!black, domain = 0.6:3.4] {\gauss{2}{0.4}};
\addplot [mark = none, smooth, very thick,black!50!black, domain = 4.5:7.5] {\gauss{6}{0.3}};
\draw [very thick](axis description cs:0.125,-0.03) -- (axis description cs:0.125,0.03);
\draw [very thick](axis description cs:0.375,-0.03) -- (axis description cs:0.375,0.03);
\node at (axis description cs:0.125,-0.08){$\boldsymbol{\mu}_1$};
\node at (axis description cs:0.375,-0.08){$\boldsymbol{\mu}_2$};

\node at (axis description cs:0.175,0.72){targeted $f_1(\boldsymbol{p})$};
\node at (axis description cs:0.395,0.92){targeted $f_2(\boldsymbol{p})$};
\node at (axis description cs:0.87,0.18){generic $f(\boldsymbol{p})$};
\addplot+[no markers, smooth, very thick,black!100!black] coordinates {
	(0.5,0.05) (2.5,0.5) (4,0.05) (6.5,0.6) (8,0.1) (10,0.2) (11,0.05) (12, 0.7)};
\end{axis}
\end{tikzpicture}
\caption{Generic prior vs targeted priors: Generic prior has an arbitrary shape spanned over the entire space; Targeted priors are concentrated at the means. In this figure, $f_1(\vp)$ and $f_2(\vp)$ illustrate two targeted priors which correspond to two patches of an image.}
\label{fig:generic_targeted_prior}
\end{figure}

\subsubsection{Generic Local Prior -- EPLL \cite{Zoran_Weiss_2011}, K-SVD \cite{Aharon_Elad_Bruckstein_2005, Elad_Aharon_2006}}
Compared to learning-based methods that use local priors, such as EPLL \cite{Zoran_Weiss_2011} and K-SVD \cite{Aharon_Elad_Bruckstein_2005, Elad_Aharon_2006}, the most important merit of the proposed method is that it requires significantly fewer training samples. A thorough justification will be discussed in Section V.

\subsubsection{PLOW \cite{Chatterjee_Milanfar_2012} }
PLOW has a similar design process as ours by considering the optimal filter. The major difference is that in PLOW, the denoising filter is derived from the full covariance matrices of the data and noise. As we will see in the next subsection, the linear denoising filter of our work is a truncated SVD matrix computed from a set of similar patches. The merit of the truncation is that it often reduces MSE in the bias-variance trade off \cite{Talebi_Milanfar_2014}.

\subsection{Improving $\mLambda$}
The Bayesian framework proposed above can be generalized to further improve the denoising performance. Referring to \eref{eq:bayesian MSE Lambda}, we observe that the BMSE optimization can be reformulated to incorporate a penalty term in $\mLambda$. Here, we consider the following $\ell_{\alpha}$ penalized BMSE:
\begin{equation}
\mathrm{BMSE}_{\alpha} \bydef \E_{\vp}\left[ \E_{\vq|\vp}\left[ \left\| \mU\mLambda\mU^T\vq - \vp \right\|_2^2 \middle| \vp \right]\right] + \gamma \|\mLambda \vec{1}\|_{\alpha},
\label{eq:penalized bayesian MSE}
\end{equation}
where $\gamma > 0$ is the penalty parameter, and $\alpha \in \{0,\,1\}$ controls which norm to be used. The solution to the minimization of \eref{eq:penalized bayesian MSE} is given by the following lemma.
\begin{lemma}
\label{lemma:penalized bayesian mse,solution}
Let $s_i$ be the $i$th diagonal entry in $\mS$, where $\mS$ is the eigenvalue matrix of $\mP\mW\mP^T$, then the optimal $\mLambda$ that minimizes $\mbox{BMSE}_{\alpha}$ is $\diag{\lambda_1,
\cdots,\lambda_d}$, where
\begin{align}
\lambda_i=\max\left({\frac{s_i-\gamma/2}{s_i+\sigma^2}, 0}\right), \quad\quad \mbox{for $\alpha=1$},
\label{penalized bayesian mse, l1 solution}\\
\lambda_i= \frac{s_i}{s_i+\sigma^2} \mathds{1} \left(\frac{s_i^2}{s_i+\sigma^2} > \gamma \right), \quad\quad \mbox{for $\alpha=0$}.
\label{penalized bayesian mse, l0 solution}
\end{align}
\end{lemma}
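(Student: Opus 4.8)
The plan is to reduce the matrix minimization in \eref{eq:penalized bayesian MSE} to $d$ decoupled scalar problems and then solve each in closed form. First I would expand the data-fidelity term exactly as in Lemma~\ref{lemma:mse}: conditioning on $\vp$ and averaging over the Gaussian noise gives
\[
\E_{\vq|\vp}\left[\left\|\mU\mLambda\mU^T\vq - \vp\right\|_2^2 \,\middle|\, \vp\right] = \sum_{i=1}^d\left[(1-\lambda_i)^2(\vu_i^T\vp)^2 + \sigma^2\lambda_i^2\right].
\]
Taking the outer expectation over $\vp$ replaces $(\vu_i^T\vp)^2$ by $\vu_i^T\E[\vp\vp^T]\vu_i = \vu_i^T(\vmu\vmu^T + \mSigma)\vu_i$. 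Using the localized estimators \eref{eq:local prior estimators}, a short computation (the same one underlying Lemma~\ref{lemma:bayesian mse,solution with estimators}) shows $\vmu\vmu^T + \mSigma = \mP\mW\mP^T$, so the diagonal of $\mU^T(\vmu\vmu^T+\mSigma)\mU$ is exactly $\mS=\mathrm{diag}\{s_1,\ldots,s_d\}$, because $\mU$ diagonalizes $\mP\mW\mP^T$ by Lemma~\ref{lemma:determine U,solution}. Hence the data-fit part collapses to $\sum_i[(1-\lambda_i)^2 s_i + \sigma^2\lambda_i^2]$.

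The next step is to note that the penalty decouples in the same way: since $\mLambda\vone = [\lambda_1,\ldots,\lambda_d]^T$, we have $\|\mLambda\vone\|_1 = \sum_i|\lambda_i|$ and $\|\mLambda\vone\|_0 = \sum_i \mathds{1}(\lambda_i\neq 0)$. Therefore
\[
\mathrm{BMSE}_\alpha = \sum_{i=1}^d\Big[(1-\lambda_i)^2 s_i + \sigma^2\lambda_i^2 + \gamma|\lambda_i|^\alpha\Big],
\]
with the convention $|\lambda_i|^0 = \mathds{1}(\lambda_i\neq0)$, and each summand depends only on $\lambda_i$. Minimizing over $\mLambda$ thus amounts to $d$ independent scalar problems, which I would treat separately for the two values of $\alpha$.

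For $\alpha = 1$ the scalar objective $g(\lambda) = (s+\sigma^2)\lambda^2 - 2s\lambda + s + \gamma|\lambda|$ is convex, and this is the familiar soft-thresholding computation. I would first argue that the minimizer is non-negative: on $\lambda\le 0$ the stationary point of the smooth branch is $(2s+\gamma)/(2(s+\sigma^2)) > 0$, which lies outside that half-line, so the minimum over $\lambda\le0$ is attained at $0$. Restricting to $\lambda\ge0$ the absolute value becomes $\gamma\lambda$, and setting the derivative to zero gives $\lambda = (s-\gamma/2)/(s+\sigma^2)$; clipping at zero when $s<\gamma/2$ produces \eref{penalized bayesian mse, l1 solution}.

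The genuinely non-routine case is $\alpha=0$, where the objective is non-convex and calculus alone is insufficient; here the main work is a direct comparison of two candidate configurations. If $\lambda_i=0$ the summand equals $s_i$ (no penalty is incurred). If $\lambda_i\neq 0$ the penalty $\gamma$ is fixed, so the best achievable value is obtained at the unconstrained quadratic minimizer $\lambda_i = s_i/(s_i+\sigma^2)$, at which the quadratic evaluates to $\sigma^2 s_i/(s_i+\sigma^2)$, giving total cost $\sigma^2 s_i/(s_i+\sigma^2) + \gamma$. Choosing the smaller of the two, the nonzero branch wins precisely when $\sigma^2 s_i/(s_i+\sigma^2) + \gamma < s_i$, i.e. $\gamma < s_i^2/(s_i+\sigma^2)$, which is exactly the indicator in \eref{penalized bayesian mse, l0 solution}. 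The only point demanding care is that $s_i\ge 0$ (it is an eigenvalue of the positive semidefinite matrix $\mP\mW\mP^T$), which guarantees the quadratic minimizer is itself non-negative and the comparison is well posed.
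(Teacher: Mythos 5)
Your proof is correct and follows essentially the same route as the paper's: both reduce $\mathrm{BMSE}_\alpha$ to the decoupled form $\sum_i[(1-\lambda_i)^2 s_i + \sigma^2\lambda_i^2] + \gamma\|\mLambda\vone\|_\alpha$ via the identity $\vmu\vmu^T+\mSigma=\mP\mW\mP^T$ and then solve $d$ independent scalar shrinkage problems. The only difference is that you derive the soft- and hard-thresholding solutions explicitly (including the correct non-negativity argument for $\alpha=1$ and the two-candidate comparison for $\alpha=0$), whereas the paper completes the square and cites the standard shrinkage literature for those closed forms.
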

\begin{proof}
See Appendix \ref{appendix:proof,penalized bayesian mse,solution}.
\end{proof}

The motivation of introducing an $\ell_{\alpha}$-norm penalty in \eref{eq:penalized bayesian MSE} is related the group sparsity used in defining $\mU$. Recall from Section III that since $\mU$ is the optimal solution to a group sparsity optimization, only few of the entries in the ideal projection $\mU^T\vp$ should be non-zero. Consequently, it is desired to require $\mLambda$ to be sparse so that $\mU\mLambda\mU^T\vq$ has similar spectral components as that of $\vp$.

To demonstrate the effectiveness of the proposed $\ell_{\alpha}$ formulation, we consider the example patch shown in \fref{fig:denoised,improved_patch_matching}. For a refined database of $k = 40$ patches, we consider the original minimum BMSE solution ($\gamma = 0$), the $\ell_0$ solution with $\gamma = 0.02$, and the $\ell_1$ solution with $\gamma = 0.02$. The results in \fref{fig:original_l1_lo} show that with the proposed penalty term, the new $\mbox{BMSE}_{\alpha}$ solution performs consistently better than the original BMSE solution.

\begin{figure}[t]
\centering
\includegraphics[width=1\linewidth]{}
\caption{Comparisons of the $\ell_1$ and $\ell_0$ adaptive solutions over the original solution with $\gamma = 0$. The PSNR value for each noise level is averaged over 100 independent trials to reduce the bias due to a particular noise realization.}
\label{fig:original_l1_lo}
\end{figure}

\section{Experimental Results}

\label{section5}
\begin{figure*}[th]
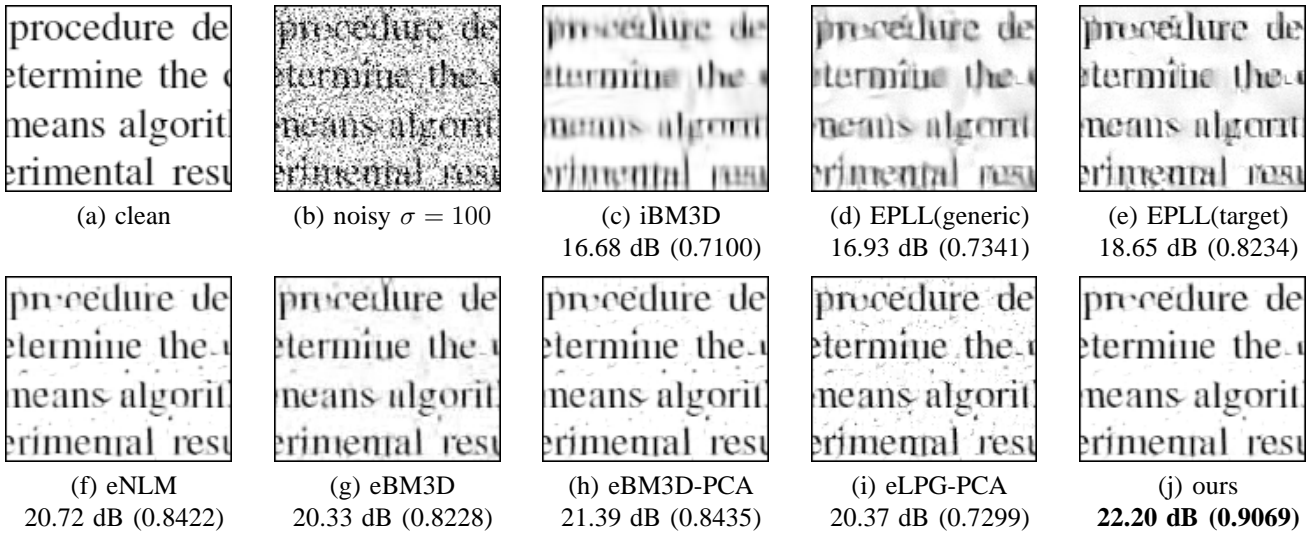

\centering
\def\iw{0.175\linewidth}
\begin{tabular}{ccccc}
\includegraphics[width=\iw]{/code/fig7/y.eps}&
\includegraphics[width=\iw]{/code/fig7/z.eps}&
\includegraphics[width=\iw]{/code/fig7/iBM3D.eps}&
\includegraphics[width=\iw]{/code/fig7/EPLL1.eps}&
\includegraphics[width=\iw]{/code/fig7/EPLL2.eps}\\
(a) clean & (b) noisy $\sigma = 100$ & (c) iBM3D  & (d) EPLL(generic)  & (e) EPLL(target) \\
                     &                                       & 16.68 dB (0.7100)         &  16.93 dB (0.7341)                  &  18.65 dB (0.8234)\\
\includegraphics[width=\iw]{/code/fig7/eNLM.eps}&
\includegraphics[width=\iw]{/code/fig7/eBM3D.eps}&
\includegraphics[width=\iw]{/code/fig7/eBM3DPCA.eps}&
\includegraphics[width=\iw]{/code/fig7/eLPGPCA.eps}&
\includegraphics[width=\iw]{/code/fig7/eNOF.eps}\\
(f) eNLM  & (g) eBM3D  & (h) eBM3D-PCA  & (i) eLPG-PCA  & (j) ours \\
 20.72 dB (0.8422)         &  20.33 dB (0.8228)          &  21.39 dB (0.8435)              &  20.37 dB (0.7299)             &  \bf{22.20} dB (\bf{0.9069})
\end{tabular}
\caption{Denoising text images: Visual comparison and objective comparison (PSNR and SSIM in the parenthesis). The test image size is of $127 \times 104$. Prefix ``\emph{i}'' stands for internal denoising (\emph{i.e.}, single-image denoising), and prefix ``\emph{e}'' stands for external denoising (\emph{i.e.}, using external databases).}
\label{fig:textDenoising}
\end{figure*}

In this section, we present a set of experimental results. 

\subsection{Comparison Methods}
The methods we choose for comparison are BM3D \cite{Dabov_Foi_Katkovnik_2007}, BM3D-PCA \cite{Dabov_Foi_Katkovnik_Egiazarian_2009}, LPG-PCA \cite{Zhang_Dong_Zhang_2010}, NLM \cite{Buades_Coll_2005_Journal}, EPLL \cite{Zoran_Weiss_2011} and KSVD \cite{Elad_Aharon_2006}. Except for EPLL and KSVD, all other four methods are internal denoising methods. We re-implement and modify the internal methods so that patch search is performed over the targeted external databases. These methods are iterated for two times where the solution of the first step is used as a basic estimate for the second step. The specific settings of each algorithm are as follows:

\begin{enumerate}
\item BM3D \cite{Dabov_Foi_Katkovnik_2007}: As a benchmark of internal denoising, we run the original BM3D code provided by the author\footnote{http://www.cs.tut.fi/\texttildelow{}foi/GCF-BM3D/}. Default parameters are used in the experiments, \emph{e.g.}, the search window is $39 \times 39$. We have included a discussion in Section \ref{experiment:text denoising} about the influence of different search window size to the denoising performance. As for external denoising, we implement an external version of BM3D. To ensure a fair comparison, we set the search window identical to other external denoising methods.

\item BM3D-PCA \cite{Dabov_Foi_Katkovnik_Egiazarian_2009} and LPG-PCA \cite{Zhang_Dong_Zhang_2010}: $\mU$ is learned from the best $k$ external patches, which is the same as in our proposed method. $\mLambda$ is computed following \eref{eq:bayesian,compare with bm3d} for BM3D-PCA and \eref{eq:bayesian,compare with LPG-PCA} for LPG-PCA. In BM3D-PCA's first step, the threshold is set to $2.7 \sigma$.

\item NLM \cite{Buades_Coll_2005_Journal}: The weights in NLM are computed according to a Gaussian function of the $\ell_2$ distance of two patches \cite{Buades_Coll_Morel, Luo_Pan_Nguyen_2012}. However, instead of using all reference patches in the database, we use the best $k$ patches following \cite{Kervrann_Boulanger_2007}.

\item EPLL \cite{Zoran_Weiss_2011}: In EPLL, the default patch prior is learned from a generic database (200,000 $8 \times 8$ patches). For a fair comparison, we train the prior distribution from our targeted databases using the same EM algorithm mentioned in \cite{Zoran_Weiss_2011}.

\item KSVD \cite{Elad_Aharon_2006}: In KSVD, two dictionaries are trained including a global dictionary and a targeted dictionary. The global dictionary is trained from a generic database of 100,000 $8 \times 8$ patches by the KSVD authors. The targeted dictionary is trained from a targeted database of 100,000 $8 \times 8$ patches containing similar content of the noisy image. Both dictionaries are of size $64 \times 256$.

\end{enumerate}
To emphasize the difference between the original algorithms (which are single-image denoising algorithms) and the corresponding new implementations for external databases, we denote the original, (single-image) denoising algorithms with ``$i$'' (internal), and the corresponding new implementations for external databases with ``$e$'' (external).

We add zero-mean Gaussian noise with standard deviations from $\sigma=20$ to $\sigma=80$ to the test images. The patch size is set as $8 \times 8$ ($i.e., d=64$), and the sliding step size is 6 in the first step and 4 in the second step. Two quality metrics, namely Peak Signal to Noise Ratio (PSNR) and Structural Similarity (SSIM) are used to evaluate the objective quality of the denoised images.

\begin{figure}[t]
\centering
\includegraphics[width=1\linewidth]{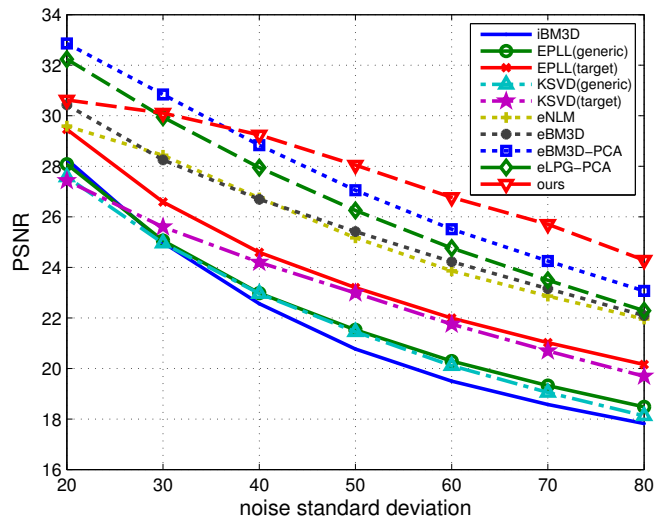}
\caption{Text image denoising: Average PSNR vs noise levels. In this plot, each PSNR value is averaged over 8 test images. The typical size of a test image is about $300 \times 200$.}
\label{fig:noiseStd_psnr_text}
\end{figure}

\subsection{Denoising Text and Documents}
\label{experiment:text denoising}
Our first experiment considers denoising a text image. The purpose is to simulate the case where we want to denoise a noisy document with the help of other similar but non-identical texts. This idea can be easily generalized to other scenarios such as handwritten signatures, bar codes and license plates.

To prepare this scenario, we capture randomly 8 regions of a document and add noise. We then build the targeted external database by cropping 9 arbitrary portions from a different document but with the same font sizes.

\subsubsection{Denoising Performance}
\fref{fig:textDenoising} shows the denoising results when we add excessive noise ($\sigma = 100$) to one query image. Among all the methods, the proposed method yields the highest PSNR and SSIM values. The PSNR is 5 dB better than the benchmark BM3D (internal) denoising algorithm. Some existing learning-based methods, such as EPLL, do not perform well due to the insufficient training samples from the targeted database. Compared to other external denoising methods, the proposed method shows a better utilization of the targeted database.

Since the default search window size for internal BM3D is only $39 \times 39$, we further conduct experiments to explore the effect of different search window sizes for BM3D. The PSNR results are shown in Table \ref{table:BM3D_different_region_size}. We see that a larger window size improves the BM3D denoising performance since more patch redundancy can be exploited. However, even if we extend the search to an external database (which is the case for eBM3D), the performance is still worse than the proposed method.

\begin{table}[h]
\centering
\begin{tabular}{|cc|c|c|c|}
\hline
& \small search window size & $\sigma=30$ & $\sigma=50$ & $\sigma=70$\\
\hline
\multirow{3}{*}{\small BM3D} & ($39 \times 39$) & 24.73 & 20.44 & 18.21\\
 & ($119 \times 119$) & 26.91 & 21.24 & 19.01\\
 & ($199 \times 199$) & 28.02 & 21.53 & 19.27\\
\small eBM3D & (\small external database) & 28.48 & 25.49 & 23.09\\
\small ours & (\small external database) & \bf{30.79} & \bf{28.43} & \bf{25.97}\\
\hline
\end{tabular}
\caption{PSNR results using BM3D with different search window sizes and the proposed method. We test the performance for three different noise levels ($\sigma=30,50,70$). The reported PSNR is computed on the entire image of size $301 \times 218$. }
\label{table:BM3D_different_region_size}
\end{table}

In \fref{fig:noiseStd_psnr_text}, we plot and compare the average PSNR values on 8 test images over a range of noise levels. We observe that at low noise levels ($\sigma < 30$), our proposed method performs worse than eBM3D-PCA and eLPG-PCA. One reason is that the patch variety of the text image database makes our estimate of $\mLambda$ in $\eref{eq:solution for mLambda}$ worse than the other two estimates in \eref{eq:bayesian,compare with bm3d} and \eref{eq:bayesian,compare with LPG-PCA}. However, as noise level increases, our proposed method outperforms other methods, which suggests that the prior of our method is more informative. For example, for $\sigma = 60$, our average PSNR result is 1.26 dB better than the second best result by eBM3D-PCA.

For the two learning-based methods, \emph{i.e.,} EPLL and KSVD, as can be seen, using a targeted database yields better results than using a generic database, which validates the usefulness of a targeted database. However, they perform worse than other non-learning methods. One reason is that a \emph{large} number of training samples are needed for these learning-based methods -- for EPLL, the large number of samples is needed to build the Gaussian mixtures, whereas for KSVD, the large number of samples is needed to train the over-complete dictionary. In contrast, the proposed method is fully functional even if the database is small.

\begin{figure}[t]
\centering
\includegraphics[width=1\linewidth]{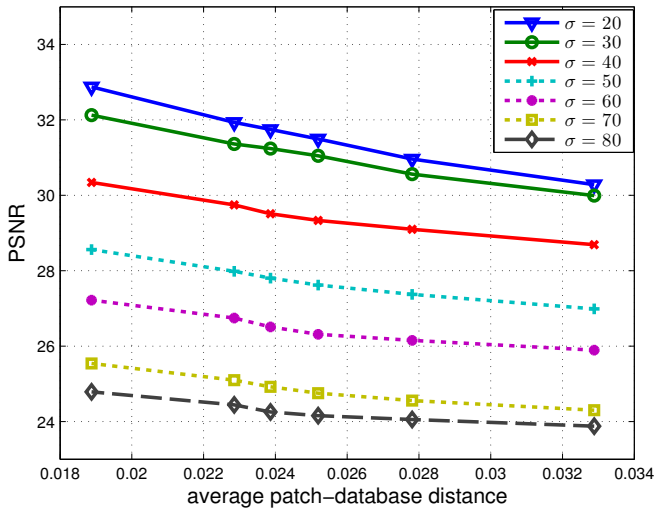}
\caption{Denoising performance in terms of the database quality. The average patch-to-database distance $\overline{d}(\calP)$ is a measure of the database quality.}
\label{fig:databaseQuality_vs_psnr}
\end{figure}

\subsubsection{Database Quality}
\begin{figure*}[th]
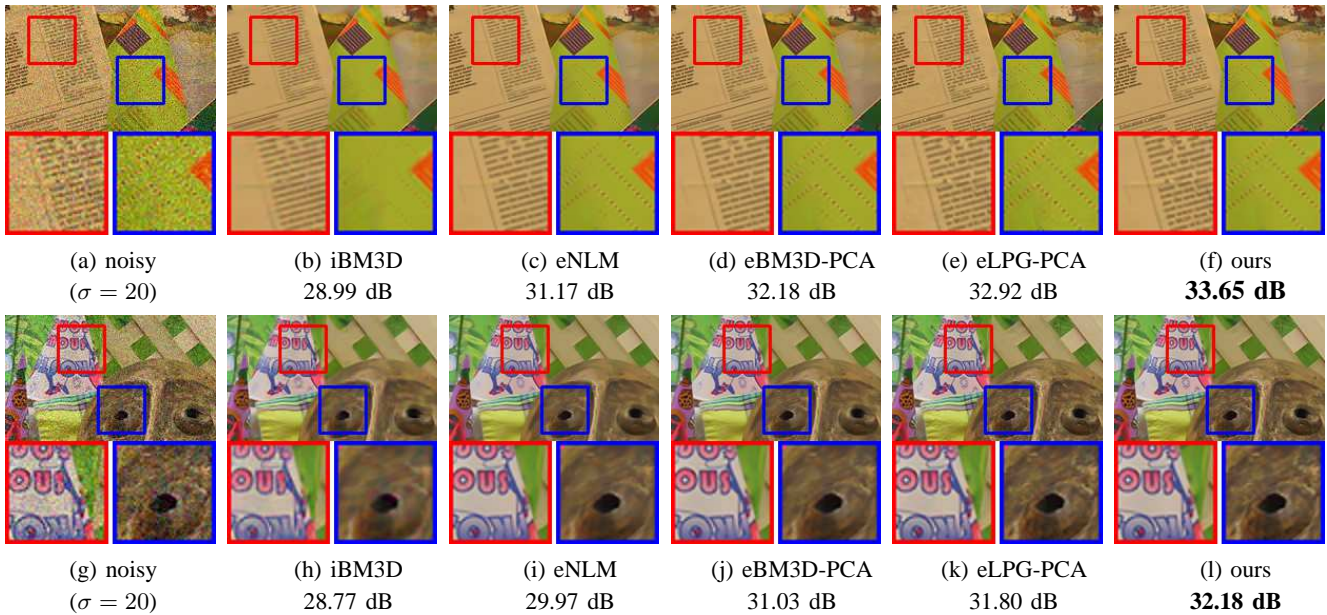

\centering
\def\iw{0.16\linewidth}
\begin{tabular}{cccccc}
\includegraphics[width=\iw]{/code/fig10/Barn/noisy.eps}&
\hspace{-3ex}
\includegraphics[width=\iw]{/code/fig10/Barn/iBM3D.eps}&
\hspace{-3ex}
\includegraphics[width=\iw]{/code/fig10/Barn/eNLM.eps} &
\hspace{-3ex}
\includegraphics[width=\iw]{/code/fig10/Barn/eBM3DPCA.eps} &
\hspace{-3ex}
\includegraphics[width=\iw]{/code/fig10/Barn/eLPGPCA.eps} &
\hspace{-3ex}
\includegraphics[width=\iw]{/code/fig10/Barn/eNOF.eps}\\
\small (a) noisy  & \small (b) iBM3D
&  \small (c) eNLM & \small (d) eBM3D-PCA& \small (e) eLPG-PCA&  \small (f) ours\\
\small ($\sigma=20$) & \small 28.99 dB & \small 31.17 dB & \small 32.18 dB & \small 32.92 dB & \textbf{33.65 dB}\\
\includegraphics[width=\iw]{/code/fig10/Cone/noisy.eps}&
\hspace{-3ex}
\includegraphics[width=\iw]{/code/fig10/Cone/iBM3D.eps}&
\hspace{-3ex}
\includegraphics[width=\iw]{/code/fig10/Cone/eNLM.eps} &
\hspace{-3ex}
\includegraphics[width=\iw]{/code/fig10/Cone/eBM3DPCA.eps} &
\hspace{-3ex}
\includegraphics[width=\iw]{/code/fig10/Cone/eLPGPCA.eps} &
\hspace{-3ex}
\includegraphics[width=\iw]{/code/fig10/Cone/eNOF.eps}\\
\small (g) noisy & \small (h) iBM3D
&  \small (i) eNLM&  \small (j) eBM3D-PCA& \small (k) eLPG-PCA& \small (l) ours\\
\small ($\sigma=20$) & \small 28.77 dB & \small 29.97 dB & \small 31.03 dB & \small 31.80 dB & \small \textbf{32.18 dB}
\end{tabular}
\caption{Multiview image denoising: Visual comparison and objective comparison (PSNR). [Top] ``Barn''; [Bottom] ``Cone''. }
\label{fig:multiviewVisualResults}
\end{figure*}

We are interested in knowing how the quality of a database would affect the denoising performance, as that could offer us important insights about the sensitivity of the algorithm. To this end, we compute the average distance from a given database to a clean image that we would like to obtain. Specifically, for each patch $\vp_i \in \R^d$ in a clean image containing $m$ patches and a database $\calP$ of $n$ patches, we compute its minimum distance $$d(\vp_i, \calP) \bydef \min\limits_{\vp_j \in \calP} \, \|\vp_i-\vp_j\|_2/\sqrt{d}.$$ The average patch-database distance is then defined as $\overline{d}(\calP) \bydef (1/m) \sum_{i=1}^m d(\vp_i, \calP)$. Therefore, a smaller $\overline{d}(\calP)$ indicates that the database is more relevant to the ground truth (clean) image.

\fref{fig:databaseQuality_vs_psnr} shows the results of six databases $\calP$, where each is a random subset of the original targeted database. For all noise levels ($\sigma = $ 20 to 80), PSNR decreases linearly as the patch-to-database distance increase, Moreover, the decay rate is slower for higher noise levels. The result suggests that the quality of the database has a more significant impact under low noise conditions, and less under high noise conditions.

\subsection{Denoising Multiview Images}
Our second experiment considers the scenario of capturing images using a  multiview camera system. The multiview images are captured at different viewing positions. Suppose that one or more cameras are not functioning properly so that some images are corrupted with noise. Our goal is to demonstrate that with the help of the other clean views, the noisy view could be restored.

To simulate the experiment, we download 4 multivew datasets from Middlebury Computer Vision Page\footnote{http://vision.middlebury.edu/stereo/}. Each set of images consists of 5 views. We add i.i.d. Gaussian noise to one view and then use the rest 4 views to assist in denoising.

In \fref{fig:multiviewVisualResults}, we visually show the denoising results of the ``Barn'' and ``Cone'' multiview datasets. In comparison to the competing methods, our proposed method has the highest PSNR values. The magnified areas indicate that our proposed method removes the noise significantly and better reconstructs some fine details. In \fref{fig:multiviewDenoising}, we plot and compare the average PSNR values on 4 test images over a range of noise levels. The proposed method is consistently better than its competitors. For example, for $\sigma = 50$, our proposed method is  1.06 dB better than eBM3D-PCA and 2.73 dB better than iBM3D. The superior performance confirms our belief that with a good database, not any denoising algorithm would perform equally well. In fact, we still have to carefully design the denoising algorithm in order to maximize the performance by fully utilizing the database.

\begin{figure}[h]
\centering
\includegraphics[width=1\linewidth]{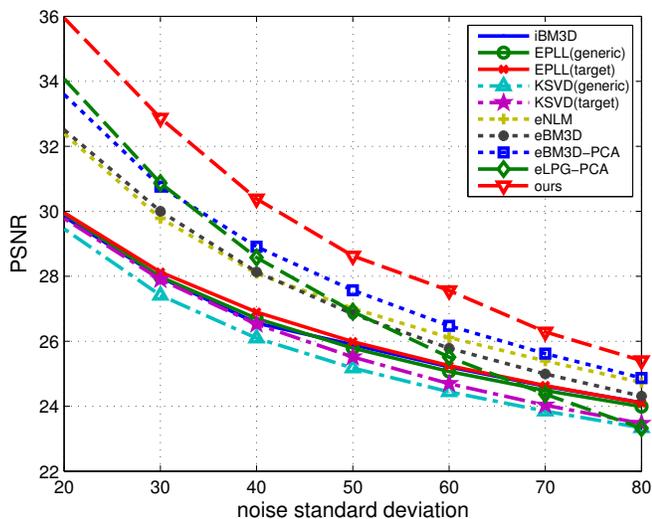}
\caption{Multiview image denoising: Average PSNR vs noise levels. In this plot, each PSNR value is averaged over 4 test images. The typical size of a test image is about $450 \times 350$.}
\label{fig:multiviewDenoising}
\end{figure}

\subsection{Denoising Human Faces}
Our third experiment considers denoising human face images. In low light conditions, images captured are typically corrupted by noise. To facilitate other high-level vision tasks such as recognition and tracking, denoising is a necessary pre-processing step. This experiment demonstrates the ability of denoising face images.

In this experiment, we use the Gore face database from \cite{Peng_Ganesh_Wright_Xu_Ma_2010}, of which some examples are shown in the top row of \fref{fig:faceVisualResults} (each image is $60 \times 80$). We simulate the denoising task by adding noise to 8 randomly chosen images and then use the other images (29 other face images in our experiment) in the database to assist in denoising.

In the top row of \fref{fig:faceVisualResults}, we show some clean face images in the database while in the bottom row, we show one of the noisy faces and its denoising results (magnified). We observe that while the facial expressions are different and there are misalignments between images, the proposed method still generates robust results. In \fref{fig:faceDenoising}, we plot the average PSNR curves on the 8 test images, where we see consistent gain compared to other methods.

\begin{figure}[h]
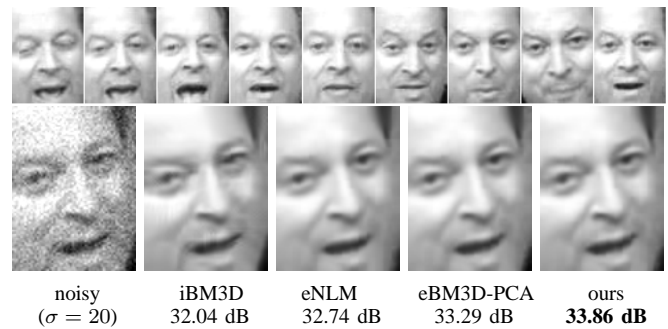

\vspace{-1ex}
\begin{minipage}[b]{1.0\columnwidth}
\centering
\includegraphics[width=1\textwidth]{/code/fig12/example_of_face_database.eps}\\
\end{minipage}
\begin{minipage}[b]{1.0\columnwidth}
\vspace{-3ex}
\captionsetup[subfigure]{labelformat=empty}
\centering
\subfloat[][noisy\\ \centering ($\sigma=20$)]{\includegraphics[width=0.2\textwidth]{/code/fig12/noisy.eps}}
\subfloat[][\;\; iBM3D\\ \centering 32.04 dB]{\includegraphics[width=0.2\textwidth]{/code/fig12/iBM3D.eps}}
\subfloat[][eNLM\\ \centering 32.74 dB]{\includegraphics[width=0.2\textwidth]{/code/fig12/eNLM.eps}}
\subfloat[][ eBM3D-PCA \\ \centering 33.29 dB]
{\includegraphics[width=0.2\textwidth]{/code/fig12/eBM3DPCA.eps}}
\subfloat[][ours\\ \centering \textbf{33.86 dB}]
{\includegraphics[width=0.2\textwidth]{/code/fig12/eNOF.eps}}
\end{minipage}
\caption{Face denoising of Gore dataset \cite{Peng_Ganesh_Wright_Xu_Ma_2010}. [Top] Database images; [Bottom] Denoising results.}
\label{fig:faceVisualResults}
\end{figure}

\begin{table*}[!]
\centering
\begin{tabular}{c|ccccc}
\hline
& iBM3D & EPLL(generic) & EPLL(target) & KSVD(generic) & KSVD(target)\\
runtime (sec) & 0.97 & 35.17 & 10.21 & 0.32 & 0.13\\
\hline
& eNLM & eBM3D &eBM3DPCA &eLPGPCA & ours\\
runtime (sec) & 95.68 & 99.17 & 102.21 & 102.14 & 144.33\\
\hline
\end{tabular}
\caption{Runtime comparison for different denoising methods. The test image is of size $301 \times 218$. For EPLL and KSVD methods, the time to train a finite Gaussian mixture model and the time to learn a dictionary is not included in the above runtime. For other external denoising methods, the targeted database consists of 9 images of similar sizes of the test image.}
\label{table:runtime_comparison}
\end{table*}

\begin{figure}[h]
\vspace{-1ex}
\centering
\includegraphics[width=1\linewidth]{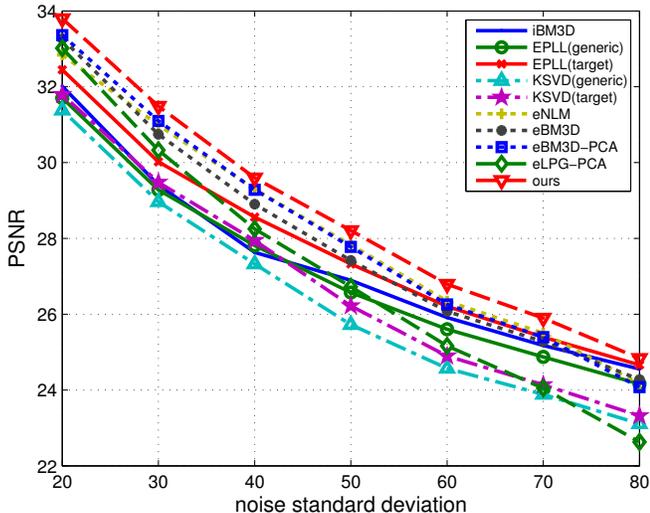}
\caption{Face denoising: Average PSNR vs noise levels. In this plot, each PSNR value is averaged over 8 test images. Each test image is of size $60 \times 80$.}
\label{fig:faceDenoising}
\end{figure}

\subsection{Runtime Comparison}
Our current implementation is in MATLAB (single thread). The runtime is about 144s to denoise an image ($301 \times 218$) with a targeted database consisting of 9 images of similar sizes. The code is run on an Intel Core i7-3770 CPU. In Table \ref{table:runtime_comparison}, we show a runtime comparison with other methods. We observe that the runtime of the proposed method is indeed not significantly worse than other external methods. In particular, the runtime of the proposed method is in the same order of magnitude as eNLM, eBM3D, eBM3D-PCA and eLPG-PCA.

We note that most of the runtime of the proposed method is spent on searching similar patches and computing SVD. Speed improvement for the proposed method is possible. First, we can apply techniques to enable fast patch search, e.g., patch match \cite{Mahmoudi_Sapiro_2005, Vignesh_Oh_Kuo_2010}, KD tree \cite{Muja_Lowe_2014}, or fast SVD \cite{Boutsidis_Magdon-Ismail_2014}. Second, random sampling schemes can be applied to further reduce the computational complexity \cite{Chan_Zickler_Lu_2013, Chan_Zickler_Lu_2014}. Third, since the denoising is independently performed on each patch, GPU can be used to parallelize the computation.

\subsection{Discussion and Future Work}
One important aspect of the algorithm that we did not discuss in depth is the sensitivity. In particular, two questions must be answered. First, assuming that there is a perturbation on the database, how much MSE will be changed? Answering the question will provide us information about the sensitivity of the algorithm when there are changes in font size (in the text example), view angle (in the multiview example), and facial expression (in the face example). Second, given a clean patch, how many patches do we need to put in the database in order to ensure that the clean patch is close to at least one of the patches in the database? The answer to this question will inform us about the size of the targeted database. Both problems will be studied in our future work.

\section{Conclusion}
Classical image denoising methods based on a single noisy input or generic databases are approaching their performance limits. We proposed an adaptive image denoising algorithm using targeted databases. The proposed method applies a group sparsity minimization and a localized prior to learn the basis matrix and the spectral coefficients of the optimal denoising filter, respectively. We show that the new method generalizes a number of existing patch-based denoising algorithms such as BM3D, BM3D-PCA, Shape-adaptive BM3D, LPG-PCA, and EPLL. Based on the new framework, we proposed improvement schemes, namely an improved patch selection procedure for determining the basis matrix and a penalized minimization for determining the spectral coefficients. For a variety of scenarios including text, multiview images and faces, we demonstrated empirically that the proposed method has superior performance over existing methods. With the increasing amount of image data available online, we anticipate that the proposed method is an important first step towards a data-dependent generation of denoising algorithms.

\appendix
\section{Appendix}
\subsection{Proof of Lemma \ref{lemma:mse,solution}}
\label{appendix:proof,mse,solution}
\begin{proof}
From \eref{eq:mse}, the optimization is
\begin{equation*}
\begin{array}{ll}
\minimize{\vu_1,\ldots,\vu_d,\lambda_1,\ldots,\lambda_d} &\quad \sum_{i=1}^d \left[(1-\lambda_i)^2 (\vu_i^T\vp)^2 + \sigma^2 \lambda_i^2\right]\\
\quad\; \subjectto                                               &\quad \vu_i^T\vu_i = 1, \quad \vu_i^T\vu_j = 0.
\end{array}
\end{equation*}
Since each term in the sum of the objective function is non-negative, we can consider the minimization over each individual term separately. This gives
\begin{equation}
\begin{array}{ll}
\minimize{\vu_i,\lambda_i} &\quad (1-\lambda_i)^2 (\vu_i^T\vp)^2 + \sigma^2 \lambda_i^2\\
\subjectto                 &\quad \vu_i^T\vu_i = 1.
\end{array}
\label{eq:appendix1,step1}
\end{equation}
In \eref{eq:appendix1,step1}, we temporarily dropped the orthogonality constraint $\vu_i^T\vu_j = 0$, which will be taken into account later. The Lagrangian function of \eref{eq:appendix1,step1} is
\begin{equation*}
\calL(\vu_i, \lambda_i, \beta) = (1-\lambda_i)^2(\vu_i^T\vp)^2 + \sigma^2\lambda_i^2 + \beta(1-\vu_i^T\vu_i),
\end{equation*}
where $\beta$ is the Lagrange multiplier. Differentiating $\calL$ with respect to $\vu_i$, $\lambda_i$ and $\beta$ yields
\begin{align}
\frac{\partial \calL}{\partial \lambda_i} & = -2(1-\lambda_i)(\vu_i^T\vp)^2 + 2\sigma^2\lambda_i \label{eq:lemma,mse,proof,kkt,lambda}\\
\frac{\partial \calL}{\partial \vu_i}     & = 2(1-\lambda_i)^2(\vu_i^T\vp)\vp - 2\beta\vu_i \label{eq:lemma,mse,proof,kkt,vu}\\
\frac{\partial \calL}{\partial \beta}     & = 1 - \vu_i^T\vu_i. \label{eq:lemma,mse,proof,kkt,beta}
\end{align}
Setting $\partial \calL / \partial \lambda_i = 0$ yields
\begin{equation}
\lambda_i = (\vu_i^T\vp)^2/ \left((\vu_i^T\vp)^2+\sigma^2\right).
\label{eq:lambda,oracle case}
\end{equation}
Substituting this $\lambda_i$ into \eref{eq:lemma,mse,proof,kkt,vu} and setting $\partial \calL / \partial \vu_i = 0$ yields
\begin{equation}
\frac{2\sigma^4(\vu_i^T\vp)\vp}{\left((\vu_i^T\vp)^2+\sigma^2\right)^2} - 2\beta\vu_i = 0.
\label{eq:lemma,mse,proof,kkt,vu2}
\end{equation}
Therefore, the optimal pair ($\vu_i$, $\beta$) of \eref{eq:appendix1,step1} must be the solution of \eref{eq:lemma,mse,proof,kkt,vu2}. The corresponding $\lambda_i$ can be calculated via \eref{eq:lambda,oracle case}.

Referring to \eref{eq:lemma,mse,proof,kkt,vu2}, we observe two possible scenarios. First, if $\vu_i$ is any unit vector orthogonal to $\vp_i$, and $\beta=0$, then \eref{eq:lemma,mse,proof,kkt,vu2} can be satisfied. This is a trivial solution, because $\vu_i \bot \vp$ implies $\vu_i^T\vp=0$, and hence $\lambda_i = 0$. The second case is that
\begin{equation}
\vu_i = \vp/\|\vp\|_2, \quad\mbox{and}\quad \beta = \frac{\sigma^4\|\vp\|^2}{(\|\vp\|^2+\sigma^2)^2}.
\label{eq:lemma,mse,proof,vu,beta}
\end{equation}
Substituting \eref{eq:lemma,mse,proof,vu,beta} shows that \eref{eq:lemma,mse,proof,kkt,vu2} is satisfied. This is the non-trivial solution. The corresponding $\lambda_i$ in this case is $\|\vp\|^2 / (\|\vp\|^2 + \sigma^2)$.

Finally, taking into account of the orthogonality constraint $\vu_i^T\vu_j=0$ if $i \not= j$, we can choose $\vu_1 = \vp/\|\vp\|_2$, and $\vu_2 \bot \vu_1$, $\vu_3 \bot \{\vu_1, \vu_2\}$, $\ldots$, $\vu_d \bot \{\vu_1,\vu_2,\ldots \vu_{d-1}\}$. Therefore, the denoising result is
\begin{equation*}
\vphat = \mU
\left(\diag{\frac{\|\vp\|^2}{\|\vp\|^2+ \sigma^2},0,\ldots,0}\right)
\mU^T\vq,
\end{equation*}
where $\mU$ is any orthonormal matrix with the first column $\vu_1 = \vp/\|\vp\|_2$.
\end{proof}

\subsection{Proof of Lemma \ref{lemma:determine U,solution}}
\label{appendix:proof,determine U,solution}
\begin{proof}
Let $\vu_i$ be the $i$th column of $\mU$. Then,  \eref{eq:determine U,problem} becomes
\begin{equation}
\begin{array}{ll}
\minimize{\vu_1,\ldots,\vu_d}  &\quad \sum_{i=1}^d \|\vu_i^T\mP\|_{2} \\
\subjectto      &\quad \vu_i^T\vu_i = 1, \quad \vu_i^T\vu_j = 0.
\end{array}
\label{eq:determine U,lemma,eq1}
\end{equation}
Since each term in the sum of \eref{eq:determine U,lemma,eq1} is non-negative, we can consider each individual term
\begin{equation*}
\begin{array}{ll}
\minimize{\vu_i} &\quad \|\vu_i^T\mP\|_{2} \\
\subjectto       &\quad \vu_i^T\vu_i = 1,
\end{array}
\end{equation*}
which is equivalent to
\begin{equation}
\begin{array}{ll}
\minimize{\vu_i}&\quad \|\vu_i^T\mP\|_{2}^2 \\
\subjectto      &\quad \vu_i^T\vu_i = 1.
\end{array}
\label{eq:determine U,lemma,eq2}
\end{equation}

The constrained problem \eref{eq:determine U,lemma,eq2} can be solved by considering the Lagrange function,
\begin{equation}
\calL(\vu_i,\beta) = \|\vu_i^T\mP\|_{2}^2 + \beta(1-\vu_i^T\vu_i).
\end{equation}
Taking derivatives $\frac{\partial \calL}{\partial \vu_i} = 0$ and $\frac{\partial \calL}{\partial \beta} = 0$ yield
\begin{align*}
\mP\mP^T\vu_i = \beta\vu_i, \quad\mbox{and}\quad \vu_i^T\vu_i = 1.
\end{align*}
Therefore, $\vu_i$ is the eigenvector of $\mP\mP^T$, and $\beta$ is the corresponding eigenvalue. Since the eigenvectors are orthonormal to each other, the solution automatically satisfies the orthogonality constraint that $\vu_i^T\vu_j=0$ if $i \not= j$.
\end{proof}

\subsection{Proof of Lemma \ref{lemma:bayesian mse,solution}}
\label{appendix:proof,bayesian mse,solution}
\begin{proof}
First, by plugging $\vq=\vp+\veta$ into BMSE we get
\begin{align*}
\mathrm{BMSE} &= \E_{\vp}\left[ \E_{\vq|\vp}\left[ \left\| \mU\mLambda\mU^T (\vp +\veta) - \vp \right\|_2^2 \middle| \vp \right]\right]\\
&= \E_{\vp} \left[ \vp^T\mU\left(\mI-\mLambda\right)^2\mU^T\vp\right] + \sigma^2\mathrm{Tr}\left( \mLambda^2 \right).
\end{align*}
Recall the fact that for any random variable $\vx \sim \calN(\vmu_x, \mSigma_x)$ and any matrix $\mA$, it holds that $\E\left[ \vx^T\mA\vx\right] = \E[\vx]^T\mA\E[\vx]+\mathrm{Tr}\left(\mA\mSigma_x\right)$. Therefore, the above BMSE can be simplified as
\begin{align}
&\vmu^T\mU(\mI-\mLambda)^2\mU^T\vmu+\mathrm{Tr}\left(\mU(\mI-\mLambda)^2\mU^T\mSigma\right)+\sigma^2\mathrm{Tr}\left(\mLambda^2\right)\nonumber \\
=&\, \mathrm{Tr}\left((\mI-\mLambda)^2\mU^T\vmu\vmu^T\mU+(\mI-\mLambda)^2\mU^T\mSigma\mU\right)+\sigma^2\mathrm{Tr}\left(\mLambda^2\right) \nonumber \\
=&\, \mathrm{Tr}\left((\mI-\mLambda)^2\mG\right) + \sigma^2\mathrm{Tr}(\mLambda^2) \nonumber \\
=&\, \sum_{i=1}^{d}\left[(1-\lambda_i)^2 g_i + \sigma^2 \lambda_i^2 \right],
\label{simplified BMSE}
\end{align}
where $\mG \bydef \mU^T\vmu\vmu^T\mU + \mU^T\mSigma\mU$ and $g_i$ is the $i$th diagonal entry in $\mG$.

Setting $\partial \mathrm{BMSE} / \partial \lambda_i = 0$ yields
\begin{equation}
2(1-\lambda_i)g_i + 2\sigma^2\lambda_i = 0.
\end{equation}
Therefore, the optimal $\lambda_i$ is $g_i / (g_i+\sigma^2)$ and the optimal $\mLambda$ is
\begin{equation}
\label{eq:bayesian MSE, optimal solution}
\mLambda = \diag{\frac{g_1}{g_1+\sigma^2},\cdots,\frac{g_d}{g_d+\sigma^2}},
\end{equation}
which, by definition, is $\left(\mathrm{diag}(\mG+\sigma^2\mI)\right)^{-1}\mathrm{diag}(\mG)$.
\end{proof}

\subsection{Proof of Lemma \ref{lemma:bayesian mse,solution with estimators}}
\label{appendix:proof,bayesian mse,solution with estimators}
\begin{proof}
First, we write $\mSigma$ in \eref{eq:local prior estimators} in the matrix form
\begin{align*}
\mSigma =&\; \left(\mP-\vmu\vec{1}^T\right)\mW \left(\mP-\vmu\vec{1}^T\right)^T\\
=&\; \mP\mW\mP^T-\vmu\vec{1}^T\mW\mP^T-\mP\mW\vec{1}\vmu^T+\vmu\vec{1}^T\mW\vec{1}\vmu^T.
\end{align*}
It is not difficult to see that $\vec{1}^T\mW\mP^T = \vmu^T, \mP\mW\vec{1}=\vmu$ and $\vec{1}^T\mW\vec{1}=1$. Therefore,
\begin{equation*}
\mSigma =\; \mP\mW\mP^T - \vmu\vmu^T-\vmu\vmu^T+\vmu\vmu^T =\; \mP\mW\mP^T-\vmu\vmu^T,
\end{equation*}
which gives
\begin{equation}
\label{eq:prior variance estimator equivalence}
\vmu\vmu^T + \mSigma = \mP\mW\mP^T.
\end{equation}
Note that $\mG = \mU^T\vmu\vmu^T\mU + \mU^T\mSigma\mU = \mU^T(\vmu\vmu^T + \mSigma) \mU$. Substituting \eref{eq:prior variance estimator equivalence} into $\mG$ and using equation \eref{eq:determine U,solution}, we have
\begin{equation*}
\mG =\; \mU^T \mP\mW\mP^T \mU =\; \mU^T \mU \mS \mU^T \mU =\; \mS.
\end{equation*}
Therefore, by Lemma \ref{lemma:bayesian mse,solution},
\begin{equation}
\mLambda = \left(\mathrm{diag}(\mS+\sigma^2\mI)\right)^{-1}\mathrm{diag}(\mS).
\end{equation}
\end{proof}

\subsection{Proof of Lemma~\ref{lemma:penalized bayesian mse,solution}}
\label{appendix:proof,penalized bayesian mse,solution}
By Lemma \ref{lemma:bayesian mse,solution with estimators}, it holds that
\begin{align*}
&\E_{\vp}\left[ \E_{\vq|\vp}\left[ \left\| \mU\mLambda\mU^T \vq - \vp \right\|_2^2 \middle| \vp \right]\right]\\
=& \sum_{i=1}^{d}\left[(1-\lambda_i)^2 s_i + \sigma^2 \lambda_i^2 \right]\\
=& \sum_{i=1}^{d} \left[ (s_i + \sigma^2) \left(\lambda_i - \frac{s_i}{s_i+\sigma^2}\right)^2 + \frac{s_i\sigma^2}{s_i+\sigma^2} \right].
\end{align*}
Therefore, the minimization of \eref{eq:penalized bayesian MSE} becomes
\begin{equation}
\label{standard shrinkage problem}
\minimize{\lambda_i} \; \sum_{i=1}^{d}\left[ (s_i + \sigma^2) \left(\lambda_i - \frac{s_i}{s_i+\sigma^2}\right)^2 \right] + \gamma \| \Lambda \vone\|_{\alpha},
\end{equation}
where $\gamma \| \Lambda \vone\|_{\alpha} = \gamma \sum_{i=1}^d |\lambda_i|$ or $\gamma \sum_{i=1}^d \mathds{1}(\lambda_i \neq 0)$ for $\alpha = 1$ or $0$.  We note that when $\alpha =$ 1 or 0, \eref{standard shrinkage problem} is the standard shrinkage problem \cite{Li_2009}, in which a closed form solution exists. Following from \cite{Chan_Khoshabeh_Gibson_Gill_Nguyen_2011}, the solutions are given by
\begin{equation*}
\lambda_i = \max{\left(\frac{s_i-\gamma/2}{s_i+\sigma^2}, 0\right)}, \quad \quad \mbox{for }\alpha=1,
\end{equation*}
and
\begin{equation*}
\lambda_i = \frac{s_i}{s_i+\sigma^2} \mathds{1} \left(\frac{s_i^2}{s_i+\sigma^2} > \gamma \right), \quad \quad \mbox{for }\alpha=0.
\end{equation*}

\bibliographystyle{IEEEbib}
\bibliography{ref_external}

\begin{thebibliography}{10}

\bibitem{Buades_Coll_2005_Journal}
A.~Buades, B.~Coll, and J.~Morel,
\newblock ``A review of image denoising algorithms, with a new one,''
\newblock {\em SIAM Multiscale Model and Simulation}, vol. 4, no. 2, pp.
  490--530, 2005.

\bibitem{Kervrann_Boulanger_2007}
C.~Kervrann and J.~Boulanger,
\newblock ``Local adaptivity to variable smoothness for exemplar-based image
  regularization and representation,''
\newblock {\em International Journal of Computer Vision}, vol. 79, no. 1, pp.
  45--69, 2008.

\bibitem{Dabov_Foi_Katkovnik_2007}
K.~Dabov, A.~Foi, V.~Katkovnik, and K.~Egiazarian,
\newblock ``Image denoising by sparse {3D} transform-domain collaborative
  filtering,''
\newblock {\em IEEE Trans. Image Process.}, vol. 16, no. 8, pp. 2080--2095,
  Aug. 2007.

\bibitem{Dabov_Foi_Katkovnik_Egiazarian_2008}
K.~Dabov, A.~Foi, V.~Katkovnik, and K.~Egiazarian,
\newblock ``A nonlocal and shape-adaptive transform-domain collaborative
  filtering,''
\newblock in {\em Proc. Intl. Workshop on Local and Non-Local Approx. in Image
  Process. (LNLA'08)}, pp. 1--8, Aug. 2008.

\bibitem{Dabov_Foi_Katkovnik_Egiazarian_2009}
K.~Dabov, A.~Foi, V.~Katkovnik, and K.~Egiazarian,
\newblock ``{BM3D} image denoising with shape-adaptive principal component
  analysis,''
\newblock in {\em Signal Process. with Adaptive Sparse Structured
  Representations (SPARS'09)}, pp. 1--6, Apr. 2009.

\bibitem{Mairal_Bach_Ponce_Sapiro_Zisserman_2009}
J.~Mairal, F.~Bach, J.~Ponce, G.~Sapiro, and A.~Zisserman,
\newblock ``Non-local sparse models for image restoration,''
\newblock in {\em Proc. IEEE Conf. Computer Vision and Pattern Recognition
  (CVPR'09)}, pp. 2272--2279, Sep. 2009.

\bibitem{Zhang_Dong_Zhang_2010}
L.~Zhang, W.~Dong, D.~Zhang, and G.~Shi,
\newblock ``Two-stage image denoising by principal component analysis with
  local pixel grouping,''
\newblock {\em Pattern Recognition}, vol. 43, pp. 1531--1549, Apr. 2010.

\bibitem{Dong_Zhang_2013}
W.~Dong, L.~Zhang, G.~Shi, and X.~Li,
\newblock ``Nonlocally centralized sparse representation for image
  restoration,''
\newblock {\em IEEE Trans. Image Process.}, vol. 22, no. 4, pp. 1620 -- 1630,
  Apr. 2013.

\bibitem{Rajwade_Rangarajan_Banerjee_2013}
A.~Rajwade, A.~Rangarajan, and A.~Banerjee,
\newblock ``Image denoising using the higher order singular value
  decomposition,''
\newblock {\em IEEE Trans. Pattern Anal. and Mach. Intell.}, vol. 35, no. 4,
  pp. 849 -- 862, Apr. 2013.

\bibitem{Zontak_Irani_2011}
M.~Zontak and M.~Irani,
\newblock ``Internal statistics of a single natural image,''
\newblock in {\em Proc. IEEE Conf. Computer Vision and Pattern Recognition
  (CVPR'11)}, pp. 977--984, Jun. 2011.

\bibitem{Mosseri_Zontak_Irani_2013}
I.~Mosseri, M.~Zontak, and M.~Irani,
\newblock ``Combining the power of internal and external denoising,''
\newblock in {\em Proc. Intl. Conf. Computational Photography (ICCP'13)}, pp.
  1--9, Apr. 2013.

\bibitem{Burger_Schuler_harmeling_2013}
H.~C. Burger, C.~J. Schuler, and S.~Harmeling,
\newblock ``Learning how to combine internal and external denoising methods,''
\newblock {\em Pattern Recognition}, pp. 121--130, 2013.

\bibitem{Glasner_Bagon_Irani_2009}
D.~Glasner, S.~Bagon, and M.~Irani,
\newblock ``Super-resolution from a single image,''
\newblock in {\em Proc. Intl. Conf. Computer Vision (ICCV'09)}, pp. 349--356,
  Sep. 2009.

\bibitem{Chatterjee_Milanfar_2010}
P.~Chatterjee and P.~Milanfar,
\newblock ``Is denoising dead?,''
\newblock {\em IEEE Trans. Image Process.}, vol. 19, no. 4, pp. 895--911, Apr.
  2010.

\bibitem{Levin_Nadler_2011}
A.~Levin and B.~Nadler,
\newblock ``Natural image denoising: Optimality and inherent bounds,''
\newblock in {\em Proc. IEEE Conf. Computer Vision and Pattern Recognition
  (CVPR'11)}, pp. 2833--2840, Jun. 2011.

\bibitem{Yan_Shao_Cvetkovic_Klijn_2012}
R.~Yan, L.~Shao, S.~D. Cvetkovic, and J.~Klijn,
\newblock ``Improved nonlocal means based on pre-classification and invariant
  block matching,''
\newblock {\em Journal of Display Technology}, vol. 8, no. 4, pp. 212--218,
  Apr. 2012.

\bibitem{Lou_Favaro_Soatto_Bertozzi_2009}
Y.~Lou, P.~Favaro, S.~Soatto, and A.~Bertozzi,
\newblock ``Nonlocal similarity image filtering,''
\newblock Lecture Notes in Computer Science, pp. 62--71. Springer Berlin
  Heidelberg, 2009.

\bibitem{Elad_Aharon_2006}
M.~Elad and M.~Aharon,
\newblock ``Image denoising via sparse and redundant representations over
  learned dictionaries,''
\newblock {\em IEEE Trans. Image Process.}, vol. 15, no. 12, pp. 3736--3745,
  Dec. 2006.

\bibitem{Zoran_Weiss_2011}
D.~Zoran and Y.~Weiss,
\newblock ``From learning models of natural image patches to whole image
  restoration,''
\newblock in {\em Proc. IEEE Intl. Conf. Computer Vision (ICCV'11)}, pp.
  479--486, Nov. 2011.

\bibitem{Chan_Zickler_Lu_2013}
S.~H. Chan, T.~Zickler, and Y.~M. Lu,
\newblock ``Fast non-local filtering by random sampling: it works, especially
  for large images,''
\newblock in {\em Proc. IEEE Intl. Conf. Acoustics, Speech and Signal Process.
  (ICASSP '13)}, pp. 1603--1607, May 2013.

\bibitem{Chan_Zickler_Lu_2014}
S.~H. Chan, T.~Zickler, and Y.~M. Lu,
\newblock ``{Monte Carlo} non-local means: Random sampling for large-scale
  image filtering,''
\newblock {\em IEEE Trans. Image Process.}, vol. 23, no. 8, pp. 3711--3725,
  Aug. 2014.

\bibitem{Levin_Nadler_Durand_Freeman_2012}
A.~Levin, B.~Nadler, F.~Durand, and W.~T. Freeman,
\newblock ``Patch complexity, finite pixel correlations and optimal
  denoising,''
\newblock in {\em Proc. 12th Euro. Conf. Computer Vision (ECCV'12)}, vol. 7576,
  pp. 73--86. Oct. 2012.

\bibitem{Joshi_Matusik_Adelson_kriegman_2010}
N.~Joshi, W.~Matusik, E.~Adelson, and D.~Kriegman,
\newblock ``Personal photo enhancement using example images,''
\newblock {\em ACM Trans. Graph}, vol. 29, no. 2, pp. 1--15, Apr. 2010.

\bibitem{Sun_Hays_2012}
L.~Sun and J.~Hays,
\newblock ``Super-resolution from internet-scale scene matching,''
\newblock in {\em Proc. IEEE Intl. Conf. Computational Photography (ICCP'12)},
  pp. 1--12, Apr. 2012.

\bibitem{Yang_Wright_Huang_Ma_2008}
J.~Yang, J.~Wright, T.~Huang, and Y.~Ma,
\newblock ``Image super-resolution as sparse representation of raw image
  patches,''
\newblock in {\em Proc. IEEE Conf. Computer Vision and Pattern Recognition
  (CVPR'08)}, pp. 1--8, Jun. 2008.

\bibitem{Johnson_Dale_Avidan_Pfister_Freeman_Matusik_2011}
M.~K. Johnson, K.~Dale, S.~Avidan, H.~Pfister, W.~T. Freeman, and W.~Matusik,
\newblock ``{CG2R}eal: Improving the realism of computer generated images using
  a large collection of photographs,''
\newblock {\em IEEE Trans. Visualization and Computer Graphics}, vol. 17, no.
  9, pp. 1273--1285, Sep. 2011.

\bibitem{Elad_Datsenko_2009}
M.~Elad and D.~Datsenko,
\newblock ``Example-based regularization deployed to super-resolution
  reconstruction of a single image,''
\newblock {\em The Computer Journal}, vol. 18, no. 2-3, pp. 103--121, Sep.
  2007.

\bibitem{Dale_Johnson_Sunkavalli_Matusik_Pfister_2009}
K.~Dale, M.~K. Johnson, K.~Sunkavalli, W.~Matusik, and H.~Pfister,
\newblock ``Image restoration using online photo collections,''
\newblock in {\em Proc. Intl. Conf. Computer Vision (ICCV'09)}, pp. 2217--2224,
  Sep. 2009.

\bibitem{Ram_Elad_Cohen_2013}
I.~Ram, M.~Elad, and I.~Cohen,
\newblock ``Image processing using smooth ordering of its patches,''
\newblock {\em IEEE Trans. Image Process.}, vol. 22, no. 7, pp. 2764--2774,
  Jul. 2013.

\bibitem{Shao_Zhang_deHann_2008}
L.~Shao, H.~Zhang, and G.~de~Haan,
\newblock ``An overview and performance evaluation of classification-based
  least squares trained filters,''
\newblock {\em IEEE Trans. Image Process.}, vol. 17, no. 10, pp. 1772--1782,
  Oct. 2008.

\bibitem{Dabov_Foi_Egiazarian_2007}
K.~Dabov, A.~Foi, and K.~Egiazarian,
\newblock ``Video denoising by sparse {3D} transform-domain collaborative
  filtering,''
\newblock in {\em Proc. 15th Euro. Signal Process. Conf.}, vol.~1, pp.
  145--149, Sep. 2007.

\bibitem{Zhang_Vaddadi_Jin_Nayar_2009}
L.~Zhang, S.~Vaddadi, H.~Jin, and S.~Nayar,
\newblock ``Multiple view image denoising,''
\newblock in {\em Proc. IEEE Intl. Conf. Computer Vision and Pattern
  Recognition (CVPR'09)}, pp. 1542--1549, Jun. 2009.

\bibitem{Buades_Lou_Morel_Tang_2009}
T.~Buades, Y.~Lou, J.~Morel, and Z.~Tang,
\newblock ``A note on multi-image denoising,''
\newblock in {\em Proc. IEEE Intl. Workshop on Local and Non-Local Approx. in
  Image Process. (LNLA'09)}, pp. 1--15, Aug. 2009.

\bibitem{Luo_Chan_Pan_Nguyen_2013}
E.~Luo, S.~H. Chan, S.~Pan, and T.~Q. Nguyen,
\newblock ``Adaptive non-local means for multiview image denoising: Searching
  for the right patches via a statistical approach,''
\newblock in {\em Proc. IEEE Intl. Conf. Image Process. (ICIP'13)}, pp.
  543--547, Sep. 2013.

\bibitem{Aharon_Elad_Bruckstein_2005}
M.~Aharon, M.~Elad, and A.~Bruckstein,
\newblock ``{K-SVD}: Design of dictionaries for sparse representation,''
\newblock {\em Proceedings of SPARS}, vol. 5, pp. 9--12, 2005.

\bibitem{Roth_Black_2009}
S.~Roth and M.J. Black,
\newblock ``Fields of experts,''
\newblock {\em Intl. J. Computer Vision}, vol. 82, no. 2, pp. 205--229, 2009.

\bibitem{Yu_Sapiro_Mallat_2012}
G.~Yu, G.~Sapiro, and S.~Mallat,
\newblock ``Solving inverse problems with piecewise linear estimators: From
  gaussian mixture models to structured sparsity,''
\newblock {\em IEEE Trans. Image Process.}, vol. 21, no. 5, pp. 2481--2499, May
  2012.

\bibitem{Yan_Shao_Liu_2013}
R.~Yan, L.~Shao, and Y.~Liu,
\newblock ``Nonlocal hierarchical dictionary learning using wavelets for image
  denoising,''
\newblock {\em IEEE Trans. Image Process.}, vol. 22, no. 12, pp. 4689--4698,
  Dec. 2013.

\bibitem{Luo_Chan_Nguyen_2014}
E.~Luo, S.~H. Chan, and T.~Q. Nguyen,
\newblock ``Image denoising by targeted external databases,''
\newblock in {\em Proc. IEEE Intl. Conf. Acoustics, Speech and Signal Process.
  (ICASSP '14)}, pp. 2469--2473, May 2014.

\bibitem{Milanfar_2013a}
P.~Milanfar,
\newblock ``A tour of modern image filtering,''
\newblock {\em {IEEE} Signal Process. Magazine}, vol. 30, pp. 106--128, Jan.
  2013.

\bibitem{Milanfar_2013b}
P.~Milanfar,
\newblock ``Symmetrizing smoothing filters,''
\newblock {\em SIAM J. Imaging Sci.}, vol. 6, no. 1, pp. 263--284, 2013.

\bibitem{Talebi_Milanfar_2014}
H.~Talebi and P.~Milanfar,
\newblock ``Global image denoising,''
\newblock {\em IEEE Trans. Image Process.}, vol. 23, no. 2, pp. 755--768, Feb.
  2014.

\bibitem{Cotter_Rao_Engan_Kreutz_2005}
S.~Cotter, B.~Rao, K.~Engan, and K.~Kreutz-Delgado,
\newblock ``Sparse solutions to linear inverse problems with multiple
  measurement vectors,''
\newblock {\em IEEE Trans. Signal Process.}, vol. 53, no. 7, pp. 2477--2488,
  Jul. 2005.

\bibitem{Kolda_Bader_2009}
T.~Kolda and B.~Bader,
\newblock ``Tensor decompositions and applications,''
\newblock {\em SIAM Review}, vol. 51, no. 3, pp. 455--500, 2009.

\bibitem{Roth_Black_2005}
S.~Roth and M.~Black,
\newblock ``Fields of experts: A framework for learning image priors,''
\newblock in {\em Proc. IEEE Computer Society Conf. Computer Vision and Pattern
  Recognition (CVPR'05), 2005}, vol.~2, pp. 860--867 vol. 2, Jun. 2005.

\bibitem{Zoran_Weiss_2012}
D.~Zoran and Y.~Weiss,
\newblock ``Natural images, gaussian mixtures and dead leaves,''
\newblock {\em Advances in Neural Information Process. Systems (NIPS'12)}, vol.
  25, pp. 1745--1753, 2012.

\bibitem{Kay_1998}
S.~M. Kay,
\newblock ``Fundamentals of statistical signal processing: Detection theory,''
\newblock 1998.

\bibitem{Chatterjee_Milanfar_2012}
P.~Chatterjee and P.~Milanfar,
\newblock ``Patch-based near-optimal image denoising,''
\newblock {\em IEEE Trans. Image Process.}, vol. 21, no. 4, pp. 1635--1649,
  Apr. 2012.

\bibitem{Buades_Coll_Morel}
A.~Buades, B.~Coll, and J.~M. Morel,
\newblock ``Non-local means denoising,'' [on line]
  \url{http://www.ipol.im/pub/art/2011/bcm_nlm/}, 2011.

\bibitem{Luo_Pan_Nguyen_2012}
E.~Luo, S.~Pan, and T.~Nguyen,
\newblock ``Generalized non-local means for iterative denoising,''
\newblock in {\em Proc. 20th Euro. Signal Process. Conf. (EUSIPCO'12)}, pp.
  260--264, Aug. 2012.

\bibitem{Peng_Ganesh_Wright_Xu_Ma_2010}
Y.~Peng, A.~Ganesh, J.~Wright, W.~Xu, and Y.~Ma,
\newblock ``{RASL}: {R}obust alignment by sparse and low-rank decomposition for
  linearly correlated images,''
\newblock {\em IEEE Trans. Pattern Anal. and Mach. Intell.}, vol. 34, no. 11,
  pp. 2233--2246, Nov. 2012.

\bibitem{Mahmoudi_Sapiro_2005}
M.~Mahmoudi and G.~Sapiro,
\newblock ``Fast image and video denoising via nonlocal means of similar
  neighborhoods,''
\newblock {\em IEEE Signal Process. Letters}, vol. 12, no. 12, pp. 839--842,
  Dec. 2005.

\bibitem{Vignesh_Oh_Kuo_2010}
R.~Vignesh, B.~T. Oh, and C.-C.~J. Kuo,
\newblock ``Fast non-local means {(NLM)} computation with probabilistic early
  termination,''
\newblock {\em IEEE Signal Process. Letters}, vol. 17, no. 3, pp. 277--280,
  Mar. 2010.

\bibitem{Muja_Lowe_2014}
M.~Muja and D.~G. Lowe,
\newblock ``Scalable nearest neighbor algorithms for high dimensional data,''
\newblock {\em IEEE Trans. Pattern Anal. and Mach. Intell.}, vol. 36, no. 11,
  pp. 2227--2240, Nov. 2014.

\bibitem{Boutsidis_Magdon-Ismail_2014}
C.~Boutsidis and M.~Magdon-Ismail,
\newblock ``Faster {SVD}-truncated regularized least-squares,''
\newblock in {\em IEEE Intl. Symp. Information Theory (ISIT'14)}, pp.
  1321--1325, Jun. 2014.

\bibitem{Li_2009}
C.~Li,
\newblock {\em An efficient algorithm for total variation regularization with
  applications to the single pixel camera and compressive sensing},
\newblock Ph.D. thesis, Rice Univ., 2009,
\newblock available at
  \url{http://www.caam.rice.edu/~optimization/L1/TVAL3/tval3_thesis.pdf}.

\bibitem{Chan_Khoshabeh_Gibson_Gill_Nguyen_2011}
S.~H. Chan, R.~Khoshabeh, K.~B. Gibson, P.~E. Gill, and T.~Q. Nguyen,
\newblock ``An augmented {L}agrangian method for total variation video
  restoration,''
\newblock {\em IEEE Trans. Image Process.}, vol. 20, no. 11, pp. 3097--3111,
  Nov. 2011.

\end{thebibliography}
\end{document}